\documentclass{article}

\usepackage[preprint]{neurips_2025}

\usepackage[utf8]{inputenc} %
\usepackage[T1]{fontenc}    %
\usepackage{hyperref}       %
\usepackage{url}            %
\usepackage{booktabs}       %
\usepackage{amsfonts}       %
\usepackage{nicefrac}       %
\usepackage{microtype}      %
\usepackage{xcolor}         %

\usepackage[most]{tcolorbox}
\usepackage{amssymb}
\usepackage{amsmath}
\usepackage{graphicx}
\usepackage{subcaption}

\usepackage{tabularx}
\usepackage{bm,bbm}
\usepackage{algorithm}
\usepackage{amsmath}
\usepackage{amsthm}
\usepackage{algorithm}
\usepackage{algorithmic}
\usepackage{tikz}
\usetikzlibrary{spy}
\theoremstyle{plain}
\newtheorem{theorem}{Theorem}[section]
\newtheorem{proposition}[theorem]{Proposition}
\newtheorem{lemma}[theorem]{Lemma}

\theoremstyle{definition}

\theoremstyle{remark}

\usepackage[capitalize,noabbrev]{cleveref}

\usepackage{etoc}
\etocdepthtag.toc{mtchapter}
\etocsettagdepth{mtchapter}{subsection}
\etocsettagdepth{mtappendix}{none}


\newcommand{\bs}{\boldsymbol{s}}


\newcommand{\bI}{\boldsymbol{I}}


\newcommand{\bepsilon}{\boldsymbol{\epsilon}}

\newcommand{\bsigma}{\boldsymbol{\sigma}}



\newcommand{\E}{\mathbb{E}}  





\title{VarDiU: A Variational Diffusive Upper Bound for One-Step Diffusion Distillation}
\author{Leyang Wang\thanks{Equal contribution. Correspondence to Leyang Wang <leyang.wang.24@ucl.ac.uk>, Mingtian Zhang <m.zhang@cs.ucl.ac.uk> }\\University College London\\
\And
Mingtian Zhang$^*$\\ University College London\\
\AND
Zijing Ou \\
    Imperial College London\\
  \And 
  David Barber \\ University College London}
\begin{document}
\maketitle
\begin{abstract}
Recently, diffusion distillation methods have compressed thousand-step teacher diffusion models into one-step student generators while preserving sample quality. Most existing approaches train the student model using a diffusive divergence whose gradient is approximated via the student’s score function, learned through denoising score matching (DSM). Since DSM training is imperfect, the resulting gradient estimate is inevitably biased, leading to sub-optimal performance. In this paper, we propose \emph{VarDiU} (pronounced /va:rdju:/), a Variational Diffusive Upper Bound that admits an unbiased gradient estimator and can be directly applied to diffusion distillation. Using this objective, we compare our method with Diff-Instruct and demonstrate that it achieves higher generation quality and enables a more efficient and stable training procedure for one-step diffusion distillation.
\end{abstract}
\section{Introduction}
Diffusion models~\cite{sohl2015deep,ho2020denoising,song2019generative,songscore} have achieved remarkable success in generating high-quality samples. However, sampling typically requires a large number of  denoising steps, making the process computationally inefficient in practice. To address this limitation, many acceleration methods~\cite{ho2020denoising,nichol2021improved,song2020denoising,liu2022pseudo,lu2022dpm,bao2022analytic,bao2022estimating,ou2024improving,debortoli2025distributionaldiffusionmodelsscoring,xiao2021tackling,yu2024hierarchical} have been proposed to reduce the number of function evaluations (NFEs) in pre-trained diffusion models, lowering the cost from thousands of steps to only tens. More recently, distillation-based approaches have further compressed the sampling process to as few as a single step while largely preserving generation quality~\cite{zhou2024score,salimansprogressive,berthelot2023tract,song2023consistency,heek2024multistep,kim2023consistency,li2024bidirectional,luo2023diff,salimans2024multistep,xie2024distillation,zhang2025towards}.

In this paper, we investigate a family of distillation methods based on divergence minimisation~\cite{luo2023diff,zhou2024adversarial,zhang2025towards,zhou2024score}. These methods compress the full denoising process of a pre-trained teacher diffusion model into a one-step latent variable model by minimising a diffusive divergence (e.g. Diffusive Reverse KL Divergence~\cite{he2024training,zhang2025towards,luo2023diff}) between the student and teacher through their respective score estimations. Although these approaches have demonstrated promising results, their gradients are approximated using the student’s score function, which is learned through denoising score matching (DSM). Because DSM training is inherently imperfect, the resulting gradient estimates are inevitably biased, often leading to sub-optimal generation performance.

In this study, we introduce \emph{Variational Diffusive Upper Bound Distillation (VarDiU)}, a method that formulates a variational upper bound of the diffusive divergence whose gradient can be estimated \textbf{unbiasedly}. We demonstrate that this unbiased gradient estimation leads to more stable and efficient training, achieving better performance compared to the original diffusive divergence, which relies on biased gradient approximations.

\section{Background on Diffusion Models}

Let $p_d(x_0)$ denote the data distribution. Diffusion models define a forward noising process as
$q(x_{0:T}) = p_d(x_0)\prod_{t=1}^T q(x_t | x_{t-1})$
where the transition kernel is a fixed Gaussian distribution
$
q(x_t | x_{t-1}) = \mathcal{N}(x_t |\sqrt{1 - \beta_t}x_{t-1},\beta_t I).
$
This process admits the closed-form distribution,
\begin{equation}
    q(x_t | x_0) = \mathcal{N}(x_t | \sqrt{\bar{\alpha}_t} x_0, (1 - \bar{\alpha}_t) I),
\end{equation}
with $\bar{\alpha}_t = \prod_{s=1}^t (1 - \beta_s)$. As $T \to \infty$, the final state $x_T$ converges to a standard Gaussian distribution, i.e., $q(x_T) \to \mathcal{N}(0, I)$. The generative process also uses a sequence of Gaussian distributions:
\begin{equation}
    p(x_{t-1} | x_t) = \mathcal{N}\!\left(x_{t-1}|\mu_{t-1}(x_t), \Sigma_{t-1}(x_t)\right),
\end{equation}
where the covariance $\Sigma_{t-1}(x_t)$ may either be learned~\cite{nichol2021improved,ou2024improving,bao2022estimating} or fixed~\cite{bao2022analytic,ho2020denoising}.  
The mean is estimated via Tweedie’s formula~\cite{efron2011tweedie,robbins1992empirical}:
\begin{equation}
    \mu_{t-1}(x_t) = \frac{1}{\sqrt{1 - \beta_t}} \Bigl(x_t + \beta_t \nabla_{x_t} \log p_d(x_t)\Bigr),
\end{equation}
where the score function $\nabla_{x_t}\log p_d(x_t)$ is approximated with a network $\bs_{\psi}(x_t;t)$ which is learned through denoising score matching (DSM)~\cite{vincent2011connection,song2019generative}:
\begin{equation}
    \mathcal{L}_{\mathrm{DSM}}(\psi) = \int p_d(x_0)\, q(x_t | x_0)\, \bigl\| \bs_{\psi}(x_t;t) - \nabla_{x_t}\log q(x_t | x_0)\bigr\|_2^2 \, \mathrm{d}x_t\, \mathrm{d}x_0.
\end{equation}
The classical diffusion models require thousands of time steps to generate high-quality samples, making them computationally expensive. In the following section, we discuss a family of methods that can distill multi-step diffusion processes into one-step generative models.

\subsection{Score-Based Distillation via Diffusive Reverse KL Minimisation}
\label{sec:dikl}
Score-based distillation methods aim to compress a teacher diffusion model, which we also denote as $p_d$, into a one-step implicit generative model~\cite{goodfellow2014generative,huszar2017variational,zhang2020spread}, defined as
\begin{align}
    p_\theta(x_0) = \int \delta(x_0 - g_\theta(z))\, p(z)\, dz,
\end{align}
where $\delta(\cdot)$ is the Dirac delta function, $p(z)$ is a standard Gaussian prior, and $g_\theta$ is a deterministic neural network that generates samples in a single step. The training objective is formulated as the \emph{Diffusive KL  divergence} (DiKL)~\cite{zhang2020spread,wang2024prolificdreamer,luo2023diff,he2024training,zhang2025towards}, which measures the discrepancy between the student and teacher distributions across all intermediate noise levels:
\begin{align}
\label{eq:dikl}
   \mathrm{DiKL}(p_{\theta}||p_d) \equiv \int_{0}^1 \omega(t) \, \mathrm{KL}(p^{(t)}_{\theta}(x_t) \| p_d^{(t)}(x_t)) \, \mathrm{d}t,
\end{align}
where $\omega(t)$ is a positive weighting function. Here, $p_\theta^{(t)}(x_t) = \int q(x_t | x_0)\, p_\theta(x_0)\, dx_0,$
and $p_d^{(t)}(x_t)$ is the marginal distribution induced by the pre-trained teacher diffusion model under the same noise kernel $q(x_t | x_0)$. This divergence is well defined even when the underlying distributions have disjoint support or don't even allow density functions~\cite{zhang2020spread}. However, direct estimation of this divergence is intractable. One can derive the gradient w.r.t. ${\theta}$ at time $t$ as
\begin{align}
\label{eq:diklgrad}
   \nabla_{\theta}   \mathrm{KL}(p^{(t)}_{\theta} \| p_d^{(t)}) 
    = \int
    \left[
        \left(\nabla_{x_t}\log p^{(t)}_{\theta}(x_t)
        - \nabla_{x_t}\log p^{(t)}_{d}(x_t)\right)
        \frac{\partial x_t}{\partial {\theta}}
    \right] \mathrm{d}x_t,
\end{align}
see A.2 in \citep{luo2023diff} for a derivation.
The teacher score $\nabla_{x_t}\log p^{(t)}_{d}(x_t)$ is available from the pre-trained diffusion model. However, the student score $\nabla_{x_t}\log p^{(t)}_{\theta}(x_t)$ is unknown. Fortunately, since we can sample from the student distribution, it can be estimated using denoising score matching (DSM):
\begin{equation}
    \mathcal{L}_{\mathrm{DSM}}(\psi^\prime) = \int p_\theta(x_0)\, q(x_t | x_0)\, \bigl\| \bs_{\psi^\prime}(x_t;t) - \nabla_{x_t}\log q(x_t | x_0)\bigr\|_2^2 \, \mathrm{d}x_t\, \mathrm{d}x_0.
\end{equation}
This method is known as VSD~\cite{wang2024prolificdreamer} or Diff-Instruct~\cite{luo2023diff} in the context of diffusion distillation.
In practice, DSM approximations are imperfect due to the limited capacity of the score network $\bs_{\psi^\prime}$. As a result, \emph{the direct gradient estimation of DiKL is biased}, which leads to suboptimal training of the distilled student model. In the next section, we introduce a variational upper bound on the reverse KL divergence that enables unbiased gradient estimation for diffusion distillation.

\section{Variational Diffusive Upper Bound Distillation}
Instead of relying on optimisation-based estimation of the DiKL objective, we propose a variational upper bound whose gradient can be estimated unbiasedly. Specifically, we propose to extend the RKL upper bound of \citep{zhang2019variational} into the diffusion space,
\begin{align*}
    \mathrm{KL}(p^{(t)}_{\theta}(x)||p^{(t)}_d(x)) &\leq \mathrm{KL}(p^{(t)}_{\theta}(x_t|z)p(z)||p^{(t)}_d(x)q^{(t)}_{\phi}(z|x_t)) \equiv \mathrm{U}^{(t)}(\theta,\phi).
\end{align*}
The inequality follows by Jensen’s inequality; and the equality can be attained when the variational posterior is equal to the true posterior $q_\phi(z|x_t;t)=p_\theta(z|x_t)\propto p_\theta(x_t|z)p(z)$ (see~\cref{app:bound_proof} for a proof). 
We are then ready to define the diffusive upper bound as 
\begin{align}
    \label{eq:diu}
    \mathrm{DiU}(\theta,\phi) \equiv \int_0^1 w(t) \mathrm{U}^{(t)}(\theta,\phi) \mathrm{d}t \geq 
    \int_0^1 w(t)\mathrm{KL}(p^{(t)}_{\theta}||p^{(t)}_d) \mathrm{d}t \equiv \mathrm{DiKL}(p_{\theta}||p_d),
\end{align}
where the bound is tight when $q^{(t)}_\phi(z|x_t)=p^{(t)}_\theta(z|x_t)\propto p^{(t)}_\theta(x_t|z)p(z)$ for all $t$ almost everywhere.

\subsection{Tractable Upper Bound Estimation}
We then discuss how to estimate this upper bound. Writing the model joint as $p^{(t)}_{\theta}(x_t,z)=p^{(t)}_{\theta}(x_t| z)p(z)$, the upper bound at time step $t$ can be written as
\begin{align}
    \label{eq:diuexpand}
    \mathrm{U}^{(t)}(\theta,\phi) = -\mathbb{H}(p^{(t)}_{\theta}(x_t,z))
    - \iint p^{(t)}_{\theta}(x_t,z) \left(\log p^{(t)}_d(x_t)\mathrm{d}x_t
    -  \log q^{(t)}_{\phi}(z|x_t)\right)\,\mathrm{d}z \mathrm{d}x_t.
\end{align}
where $\mathbb{H}(p^{(t)}_{\theta}(x_t,z))$ denotes the joint entropy, i.e. $\mathbb{H}(p)=-\int p(x)\log p(x)\mathrm{d}x$.  In the original reverse KL upper bound in~\citep{zhang2019variational}, the joint entropy is trained with the reparametrisation trick. In our case, when the student model is an implicit model, we present the following surprising fact that \emph{the joint entropy is a constant}, which removes the requirement of joint entropy estimation.
\begin{proposition}
    \label{prop:joint}
    Let $p_{\theta}(x,z)=p_{\theta}(x |z)p(z)$
    with a fixed prior \(p(z)\) and 
    $p_{\theta}(x|z)=\mathcal{N}\bigl(x;\,g_{\theta}(z),\sigma^2{I}\bigr)$.  Then the joint entropy
    $\mathbb{H}(p_{\theta}(x,z))$ is a constant and independent of $\theta$.
\end{proposition}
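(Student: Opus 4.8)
The plan is to reduce the joint entropy to a sum of a $\theta$-independent prior term and the entropy of a Gaussian with fixed covariance, using the chain rule for differential entropy. First I would write $p_\theta(x,z) = p_\theta(x|z)\,p(z)$ and expand
\begin{align*}
\mathbb{H}(p_\theta(x,z)) = \mathbb{H}(p(z)) + \int p(z)\,\mathbb{H}\!\left(p_\theta(x\mid z)\right)\mathrm{d}z,
\end{align*}
where $\mathbb{H}(p_\theta(x\mid z))$ denotes the differential entropy of the conditional density $\mathcal{N}(x;g_\theta(z),\sigma^2 I)$ at a fixed $z$. The first term carries no dependence on $\theta$ by hypothesis, so everything reduces to controlling the second.

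Next I would invoke the elementary fact that the differential entropy of a multivariate Gaussian depends only on its covariance and not on its mean: for every $m\in\mathbb{R}^d$,
\begin{align*}
\mathbb{H}\!\left(\mathcal{N}(\cdot;m,\sigma^2 I)\right) = \tfrac{d}{2}\log\!\left(2\pi e\,\sigma^2\right).
\end{align*}
Substituting $m=g_\theta(z)$ shows that $\mathbb{H}(p_\theta(x\mid z))$ equals this same constant for every $z$ and every $\theta$, so the integral against $p(z)$ is just $\tfrac{d}{2}\log(2\pi e\sigma^2)$, and hence $\mathbb{H}(p_\theta(x,z)) = \mathbb{H}(p(z)) + \tfrac{d}{2}\log(2\pi e\sigma^2)$, which is manifestly independent of $\theta$. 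As a cross-check I would also derive the same value through the reparametrisation $x = g_\theta(z) + \sigma\epsilon$ with $\epsilon\sim\mathcal{N}(0,I)$ independent of $z$: the map $(z,\epsilon)\mapsto(z,x)$ has block-triangular Jacobian with diagonal blocks $I$ and $\sigma I$, hence $|\det J| = \sigma^d$ regardless of $g_\theta$, and the change-of-variables formula for entropy gives $\mathbb{H}(p_\theta(x,z)) = \mathbb{H}(p(z)) + \tfrac{d}{2}\log(2\pi e) + d\log\sigma$, the same constant.

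I do not expect a genuine obstacle; the proposition is essentially a one-line observation once the Gaussian-entropy identity is in hand. The only points I would state carefully are (i) that $g_\theta(z)\in\mathbb{R}^d$, so the conditional is a non-degenerate Gaussian with finite entropy and the chain-rule decomposition is legitimate — in particular the joint density is absolutely continuous even though the student marginal $p_\theta(x_0)$ is a Dirac mixture with no differential entropy; and (ii) that the argument is unchanged if the conditional mean is any $\theta$-independent rescaling of $g_\theta(z)$, e.g. $\sqrt{\bar{\alpha}_t}\,g_\theta(z)$ with $\sigma^2 = 1-\bar{\alpha}_t$ at noise level $t$, so the conclusion transfers directly to each $p^{(t)}_{\theta}(x_t,z)$ appearing in $\mathrm{DiU}(\theta,\phi)$.
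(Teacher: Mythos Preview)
Your proposal is correct and follows essentially the same route as the paper: decompose the joint entropy via the chain rule into $\mathbb{H}(p(z))$ plus the expected conditional entropy, then use that a Gaussian's differential entropy depends only on its covariance to conclude the conditional term equals $\tfrac{d}{2}\log(2\pi e\,\sigma^2)$ independently of $g_\theta(z)$. Your reparametrisation cross-check and the remarks on absolute continuity and on the scaled-mean variant $\sqrt{\bar\alpha_t}\,g_\theta(z)$ go a bit beyond what the paper writes, but the core argument is identical.
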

See \cref{sec:entropy_proof} for a proof. This should be distinguished from the marginal entropy $\mathbb{H}(p^{(t)}_{\theta}(x_t))$, which still depends on $\theta$, also see Section~\ref{sec:im} for a deeper discussion.  This observation greatly simplifies training by eliminating the need for entropy estimation.

The second term of Equation~\ref{eq:diu} cannot be directly optimised via automatic differentiation because the teacher density $p^{(t)}_d(x_t)$ is unknown. However, in typical distillation settings, the score function $\nabla_{x_t}\log p^{(t)}_d(x_t)$ is assumed to be available. Leveraging this, we propose an alternative loss whose gradient is equivalent and can be computed via the score function:
\begin{align}
    \nabla_{{\theta}} \int p^{(t)}_{\theta}(x_t) \log p^{(t)}_d(x_t) \mathrm{d}x_t= \nabla_{{\theta}} \int p^{(t)}_\theta(x_t) \left(x_t^\top [\nabla_{x_t}\log p^{(t)}_d(x_t)]_{\text{sg}}\right) \mathrm{d}x_t,
    \label{eq:grad}
\end{align}
where $[\,\cdot\,]_{\mathrm{sg}}$ denotes the stop-gradient operator.
The proof involves using the reparametrisation trick and chain rule; see \cref{app:proofgrad} for details. The final loss objective at time $t$ becomes
\begin{align}
    \label{eq:diuexpand}
    \mathrm{U}^{(t)}(\theta,\phi) \doteq 
    - \iint p^{(t)}_{\theta}(x_t,z) \left( x_t^\top [\nabla_{x_t}\log p^{(t)}_d(x_t)]_{\text{sg}}
    + \log q^{(t)}_{\phi}(z|x_t)\right)\,\mathrm{d}z \mathrm{d}x_t,
\end{align}
where $\doteq$ denotes equivalence up to a constant. Therefore, when using a simple Gaussian variational family \( q^{(t)}_\phi(z|x_t) \), this bound can be unbiasedly estimated. A loss used practically can be found in \cref{eq:eqloss}. We also propose a novel noise schedule and variance reduction techniques for training with this objective, see \cref{app:schedule} for details.

\subsection{Improving Variational Inference with Normalising Flows} 
The upper bound becomes tight, i.e., equal to the DiKL objective, when the variational posterior matches the true posterior: \( q^{(t)}_\phi(z|x_t) = p^{(t)}_\theta(z|x_t) \). Simple Gaussian variational distributions often struggle to approximate complex posteriors, particularly in cases where the true posterior is multi-modal or exhibits strong correlations. To address this limitation, we could use normalising flow ~\cite{tabak2010density,tabak2013family,dinh2014nice,rezende2015variational,zhai2024normalizing,durkan2019neural,kobyzev2020normalizing} to increase the flexibility of the posterior ~\cite{rezende2015variational}. Specifically, we define:
\begin{align}
q^{(t)}_{\phi}(z | x_t) = \int \delta\left(z - f_{\phi}(a,x_t, t)\right) r_{\phi}(a | x_t; t) \,\mathrm{d}a,
\end{align}
where $f$ is an invertible function and the base distribution is defined as a Gaussian indexed by $t$.
\begin{align}
r_{\phi}(a | x_t; t) = \mathcal{N}\left(a; \mu_{\phi}(x_t; t),\, \sigma_t^2 \mathrm{diag}(\sigma^2_{\phi}(x_t; t))\right).
\end{align}
The time-dependent scaling factor \( \sigma_t \) modulates the entropy of the base distribution to match the noise level at diffusion step \( t \). 
The corresponding log-density of \( q^{(t)}_\phi(z | x_t) \) can be computed using the change-of-variables formula:
\begin{align}
\log q^{(t)}_{\phi}(z = f_{\phi}(a; t) | x_t; t)
= \log r_{\phi}(a | x_t; t) - \log \left| \det \frac{\partial f_{\phi}}{\partial a} \right|,
\end{align}
We can then substitute this log-density into Equation~\eqref{eq:diuexpand} to obtain a tractable objective for distillation.

\subsection{An Information Maximisation View of the Variational Diffusive Upper Bound}\label{sec:im}

In the previous section, we showed that the joint entropy term in Equation~\eqref{eq:diuexpand} is constant with respect to \(\theta\). As a result, the variational upper bound at diffusion step \(t\) simplifies to:
\begin{align}
    \label{eq:diuexpand_2}
    \mathrm{U}^{(t)}(\theta,\phi) =
    - \int p^{(t)}_{\theta}(x_t) \log p^{(t)}_d(x_t)\,\mathrm{d}x_t
    - \iint p^{(t)}_{\theta}(x_t,z) \log q^{(t)}_{\phi}(z|x_t)\,\mathrm{d}z\,\mathrm{d}x_t.
\end{align}
Now consider the original DiKL divergence at time step \(t\), defined as:
\begin{align}
    \label{eq:original:KL}
    \mathrm{KL}(p^{(t)}_{\theta} \,\|\, p^{(t)}_d) 
    = - \int p^{(t)}_{\theta}(x_t)\log p^{(t)}_d(x_t)\,\mathrm{d}x_t
    + \int p^{(t)}_{\theta}(x_t) \log p^{(t)}_{\theta}(x_t)\,\mathrm{d}x_t.
\end{align}
We observe that the first term in both objectives (Equations~\eqref{eq:diuexpand_2} and \eqref{eq:original:KL}) is identical. The second term in Equation~\eqref{eq:original:KL} corresponds to the negative entropy of the marginal distribution, i.e., \( -\mathbb{H}(p^{(t)}_\theta(x_t)) \), which depends on \(\theta\).
We can then obtain an alternative perspective for justifying the proposed upper bound, formalised in the following theorem:
\begin{theorem}
\label{thm:entropy}
By the chain rule of entropy, we have:
$\mathbb{H}(p^{(t)}_\theta(x_t, z)) = \mathbb{H}(p^{(t)}_\theta(z | x_t)) + \mathbb{H}(p^{(t)}_\theta(x_t))$.
Since \( \mathbb{H}(p^{(t)}_\theta(x_t, z)) \) is constant with respect to \(\theta\), it follows that:
\begin{align}
\max_{\theta} \mathbb{H}(p^{(t)}_\theta(x_t))
= \min_{\theta} \mathbb{H}(p^{(t)}_\theta(z|x_t))
\le \min_{\theta, \phi} \;
- \iint p^{(t)}_{\theta}(x_t, z) \log q^{(t)}_{\phi}(z | x_t) \,\mathrm{d}z\,\mathrm{d}x_t,
\end{align}
where \( q^{(t)}_{\phi}(z|x_t) \) is a variational approximation to the true posterior \( p^{(t)}_\theta(z|x_t) \).
\end{theorem}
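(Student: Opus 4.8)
The plan is to combine the entropy chain rule with \cref{prop:joint} (recast at noise level $t$) and the Gibbs inequality, and then to be careful about the nesting of the two minimisations. \textbf{Step 1 (chain rule).} Factorising $p^{(t)}_\theta(x_t,z)=p^{(t)}_\theta(z\mid x_t)\,p^{(t)}_\theta(x_t)$ and substituting into $\mathbb{H}(p)=-\int p\log p$, the logarithm splits additively; integrating $z$ out of the marginal term and using $\int p^{(t)}_\theta(z\mid x_t)\,\mathrm{d}z=1$ yields
\[
\mathbb{H}\bigl(p^{(t)}_\theta(x_t,z)\bigr)=\mathbb{H}\bigl(p^{(t)}_\theta(z\mid x_t)\bigr)+\mathbb{H}\bigl(p^{(t)}_\theta(x_t)\bigr),
\]
where $\mathbb{H}(p^{(t)}_\theta(z\mid x_t)):=-\iint p^{(t)}_\theta(x_t,z)\log p^{(t)}_\theta(z\mid x_t)\,\mathrm{d}z\,\mathrm{d}x_t$ is the averaged conditional entropy. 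These are genuine densities: for $t>0$ the Gaussian noising kernel makes the joint strictly positive, so the marginal and conditional are well defined.

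\textbf{Step 2 (joint entropy is a $\theta$-independent constant).} Passing $x_0=g_\theta(z)$ through $q(x_t\mid x_0)=\mathcal{N}(x_t;\sqrt{\bar{\alpha}_t}x_0,(1-\bar{\alpha}_t)I)$ gives $p^{(t)}_\theta(x_t\mid z)=\mathcal{N}\bigl(x_t;\sqrt{\bar{\alpha}_t}\,g_\theta(z),(1-\bar{\alpha}_t)I\bigr)$, which is again a Gaussian location family in $z$ with mean map $\sqrt{\bar{\alpha}_t}\,g_\theta$ and a covariance that does not depend on $z$ or $\theta$. \cref{prop:joint} therefore applies and gives $\mathbb{H}(p^{(t)}_\theta(x_t,z))=C_t$ for a constant $C_t$ independent of $\theta$. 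Combined with Step 1, $\mathbb{H}(p^{(t)}_\theta(x_t))+\mathbb{H}(p^{(t)}_\theta(z\mid x_t))=C_t$, so maximising $\mathbb{H}(p^{(t)}_\theta(x_t))$ over $\theta$ is the \emph{same} optimisation problem as minimising $\mathbb{H}(p^{(t)}_\theta(z\mid x_t))$ over $\theta$ — they share maximisers/minimisers and differ only by the fixed constant $C_t$ and a sign. This is the first ``$=$'' in the statement.

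\textbf{Step 3 (variational bound and the order of minimisation).} For any fixed $\theta$ and any member $q^{(t)}_\phi$ of the variational family, inserting $p^{(t)}_\theta(z\mid x_t)$ gives
\begin{align*}
-\iint p^{(t)}_\theta(x_t,z)\log q^{(t)}_\phi(z\mid x_t)\,\mathrm{d}z\,\mathrm{d}x_t
&=\mathbb{H}\bigl(p^{(t)}_\theta(z\mid x_t)\bigr)+\mathbb{E}_{p^{(t)}_\theta(x_t)}\!\Bigl[\mathrm{KL}\bigl(p^{(t)}_\theta(z\mid x_t)\,\|\,q^{(t)}_\phi(z\mid x_t)\bigr)\Bigr]\\
&\ge \mathbb{H}\bigl(p^{(t)}_\theta(z\mid x_t)\bigr),
\end{align*}
by non-negativity of KL. Hence $\mathbb{H}(p^{(t)}_\theta(z\mid x_t))\le F(\theta):=\min_\phi\bigl[-\iint p^{(t)}_\theta(x_t,z)\log q^{(t)}_\phi(z\mid x_t)\bigr]$ for every $\theta$. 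Picking $\theta^\star\in\arg\min_\theta F(\theta)$ then yields $\min_\theta\mathbb{H}(p^{(t)}_\theta(z\mid x_t))\le\mathbb{H}(p^{(t)}_{\theta^\star}(z\mid x_t))\le F(\theta^\star)=\min_{\theta,\phi}\bigl[-\iint p^{(t)}_\theta(x_t,z)\log q^{(t)}_\phi(z\mid x_t)\bigr]$, the claimed inequality.

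\textbf{Main obstacle.} None of the computations are hard; the only delicate point will be the bookkeeping in Step 3. The Gibbs inequality holds \emph{pointwise} in $\theta$, so one must argue (via the minimiser $\theta^\star$ of $F$, as above) that replacing the left side by $\min_\theta$ and the right side by $\min_{\theta,\phi}$ keeps the inequality in the same direction, rather than tacitly swapping $\min_\theta$ with $\min_\phi$ or reading it as a saddle point. A secondary point worth stating explicitly is that the first ``$=$'' should be read as an equivalence of $\theta$-optimisation problems (up to the additive constant $C_t$ and a sign), not a numerical identity of optimal values; the substantive content is that driving down $\mathrm{U}^{(t)}$ in $\theta$ implicitly \emph{maximises} the student's marginal entropy $\mathbb{H}(p^{(t)}_\theta(x_t))$.
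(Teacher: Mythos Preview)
Your proposal is correct and follows essentially the same route as the paper: the chain rule, \cref{prop:joint} to make the joint entropy a $\theta$-independent constant, and the Gibbs/KL inequality to obtain the variational cross-entropy bound. Your Step~3 treatment of the nested minimisations and your remark that the first ``$=$'' should be read as an equivalence of $\theta$-optimisation problems (up to the additive constant $C_t$) are, if anything, more careful than the paper's presentation, which writes the same chain of (in)equalities without dwelling on these bookkeeping points.
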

See \cref{app:entropy} for a proof. This result mirrors the variational approach in the Information Maximisation (IM) algorithm \citep{barber2004algorithm}, which maximises mutual information by minimising the conditional entropy using a variational decoder. Hence, this entropy bound can be viewed as a continuous latent-variable adaptation of the IM algorithm, where entropy maximisation serves to promote informative and diverse model outputs. 

\section{Experiments}
To demonstrate the effectiveness of our approach, we consider a toy 2d mixture of 40 Gaussians (MoG-40) proposed in \cite{midgleyflow}, where the means are uniformly distributed over $[-40, 40]^2$. We train VarDiU with two types of posteriors: \textbf{(a)} a Gaussian posterior with learnable mean and variance (VarDiU-Gaussian), and \textbf{(b)} a Neural Spline Flow (NSF; \cite{durkan2019neural}) with a learnable-parameters Gaussian base distribution (VarDiU-NSF). We compare VarDiU to Diff-Instruct \citep{luo2023diff} using varying numbers of student score training steps $(1,5,10)$. 
Experimental details can be found in \cref{app:exp_details}.

\textbf{Settings} We consider three practical settings:  
\textbf{(1)} access to the true analytical score;  
\textbf{(2)} access to the training dataset; %
\textbf{(3)} access to a teacher score pre-trained by a diffusion model. For the given training data case, we use the  empirical score, defined by $\nabla_{x_t}\log \hat{p}_d(x_t)=\frac{1}{N}\sum_{n=1}^N \nabla_{x_t}\log q(x_t|x_0^{(n)})$ with the given dataset $\mathcal{D}=\{x_0^{(n)}\}^{N}_{n=1}$, where $q(x_t|x_0)$ matches the diffusion schedule of the VarDiU. This empirical score is a consistent estimator of the true data score~\cite{song2019generative,song2020score,zhang2019variational}, see \cref{eq:emp_score} for details. The learned score estimator is obtained by training EDM \citep{karras2022edm}, which is a popular class of diffusion models, on the observed data. For each setting, we report results over 10 independent runs. The noise schedule and weighting strategy are detailed in \cref{app:schedule}.

\textbf{Metrics} We report the \emph{log-density} of generated samples under the true data distribution, which reflects sample quality but not sample diversity. We additionally compute the Maximum Mean Discrepancy (MMD; \cite{gretton2012kernel}) with five kernels which can measure diversity, see for details in \cref{app:imp_detail}.

\begin{table}
  \centering
  \caption{\textbf{Comparison across different settings.} ($\uparrow$) denotes the higher is better and ($\downarrow$) denotes the lower is better. Best performance is in bold. Second best is underlined. \textit{“True”} indicates the samples from the target distribution. Diff-inst ($k$) means Diff-instruct trained with $k$ score steps. We can find VarDiU achieve best results with true and empirical score and remain competitive with learned score.}
  \label{tab:mog40_all}
  
  \begin{subtable}[t]{\linewidth}
    \centering
    \subcaption{Comparison with true score}
    \label{tab:truemog40}
    \resizebox{\linewidth}{!}{
    \begin{tabular}{lrrrrrr}
      \toprule
      \textbf{Metrics} & \textit{True} & Diff-Inst (1) & Diff-Inst (5) & Diff-Inst (10) & VarDiU Gaussian & VarDiU NSF\\
      \midrule
      Log-Density ($\uparrow$) & \textit{-6.65} & $-10.17 \pm 1.77$ & $-8.09 \pm 0.86$ & $-7.86 \pm 0.56$ & $\underline{-7.00 \pm 0.02}$ & $\boldsymbol{-6.99 \pm 0.01}$ \\
      Log-MMD ($\downarrow$) & \textit{/} & $-5.59 \pm 0.36$ & $-6.09 \pm 0.40$ & $-5.79 \pm 0.36$ & $\underline{-6.86 \pm 0.29}$ & $\boldsymbol{-7.33 \pm 0.17}$\\
      \bottomrule
    \end{tabular}}
  \end{subtable}
  \hfill
  \begin{subtable}[t]{\linewidth}
    \centering
    \subcaption{Comparison with training data }
    \label{tab:kdemog40}
    \resizebox{\linewidth}{!}{
    \begin{tabular}{lrrrrrrr}
      \toprule
      \textbf{Metrics} & \textit{True} &  Diff-Inst (1) & Diff-Inst (5) & Diff-Inst (10) & VarDiU Gaussian & VarDiU NSF\\
      \midrule
      Log-Density ($\uparrow$) & \textit{-6.65}  & $-9.65 \pm 0.62$ & $-8.38 \pm 0.42$ & $-7.96 \pm 0.62$ & $\underline{-7.09 \pm 0.03}$ &  $\boldsymbol{-7.01 \pm 0.02}$ \\
      Log-MMD ($\downarrow$) & \textit{/}  & $-5.51 \pm 0.26$ & $-5.60 \pm 0.30$ & $-5.63 \pm 0.68$ & $\underline{-6.36 \pm 0.22}$ & $\boldsymbol{-6.81 \pm 0.30}$\\
      \bottomrule
    \end{tabular}}
  \end{subtable}
  \hfill
  \begin{subtable}[t]{\linewidth}
    \centering
    \subcaption{Comparison with learned score}
    \label{tab:edmmog40}
    \resizebox{\linewidth}{!}{
    \begin{tabular}{lrrrrrrr}
      \toprule
      \textbf{Metrics} & \textit{True} & EDM & Diff-Inst (1) & Diff-Inst (5) & Diff-Inst (10) & VarDiU Gaussian & VarDiU NSF\\
      \midrule
      Log-Density ($\uparrow$) & \textit{-6.65} & -6.69 & $-10.88 \pm 1.55$ & $-9.21 \pm 0.70$ & $-8.47 \pm 0.56$ & $\underline{-8.13 \pm 0.19}$ & $\boldsymbol{-7.89 \pm 0.15}$ \\
      Log-MMD ($\downarrow$) & \textit{/} & -6.67 & $-5.24 \pm 0.30$ & $-5.57 \pm 0.27$ & $\boldsymbol{-5.82 \pm 0.27}$ & $-5.64 \pm 0.25$ & $\underline{-5.68 \pm 0.30}$\\
      \bottomrule
    \end{tabular}}
  \end{subtable}
\end{table}
\begin{figure}
    \centering
    \begin{subfigure}[b]{0.24\textwidth}
        \centering
        \includegraphics[width=\linewidth]{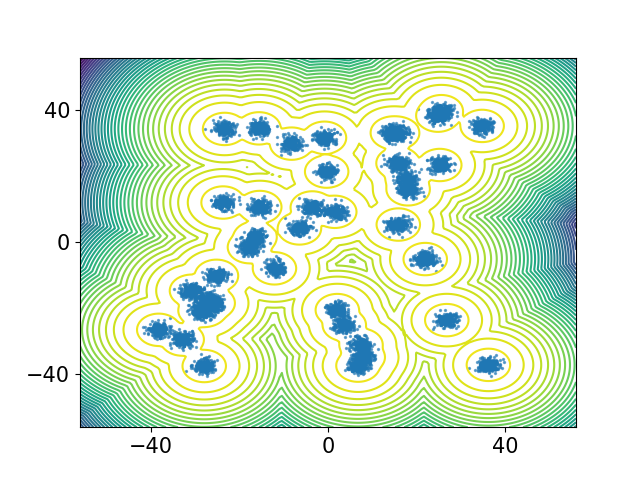}
        \caption{True Samples}
    \end{subfigure}
    \begin{subfigure}[b]{0.24\textwidth}
        \centering
        \includegraphics[width=\linewidth]{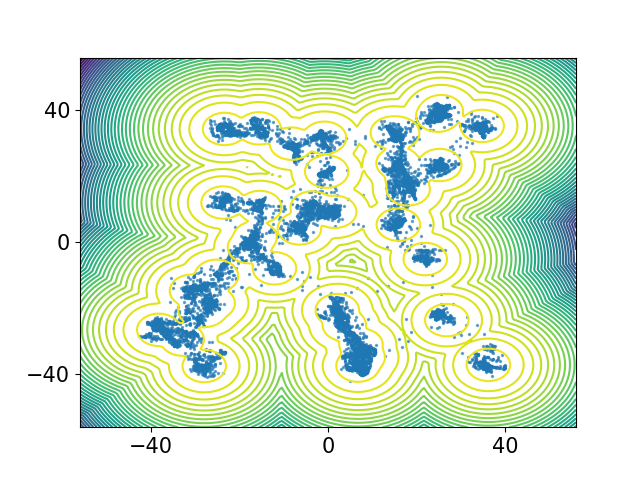}
        \caption{Diff-Instruct}
    \end{subfigure}
    \begin{subfigure}[b]{0.24\textwidth}
        \centering
        \includegraphics[width=\linewidth]{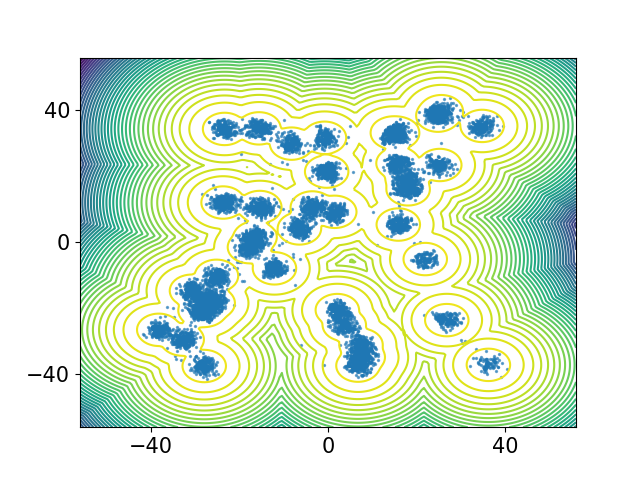}
        \caption{VarDiU Gaussian}
    \end{subfigure}
    \begin{subfigure}[b]{0.24\textwidth}
        \centering
        \includegraphics[width=\linewidth]{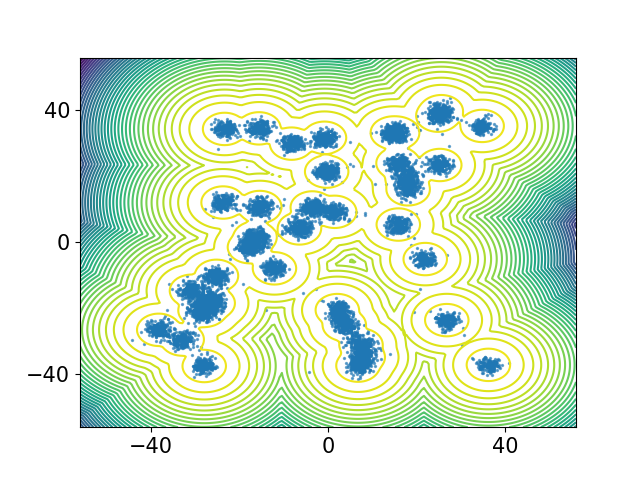}
        \caption{VarDiU NSF}
    \end{subfigure}
    \begin{subfigure}[b]{0.24\textwidth}
        \centering
        \includegraphics[width=\linewidth]{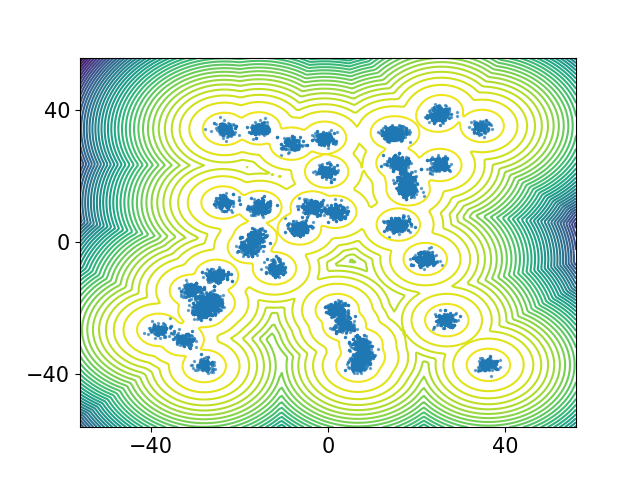}
        \caption{Empirical EDM}
    \end{subfigure}
     \begin{subfigure}[b]{0.24\textwidth}
        \centering
        \includegraphics[width=\linewidth]{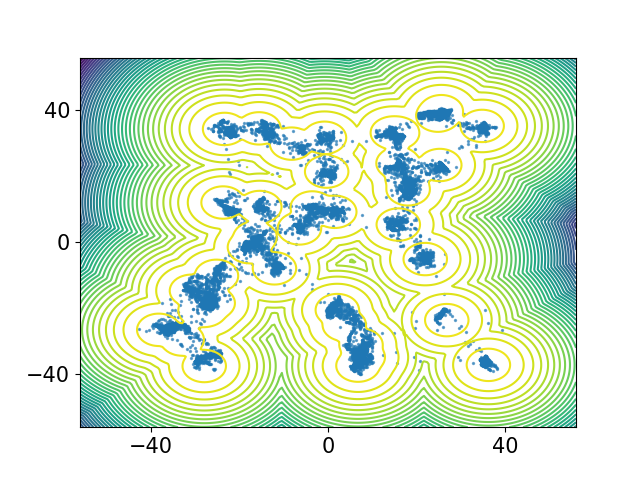}
        \caption{Diff-Instruct}
    \end{subfigure}
     \begin{subfigure}[b]{0.24\textwidth}
        \centering
        \includegraphics[width=\linewidth]{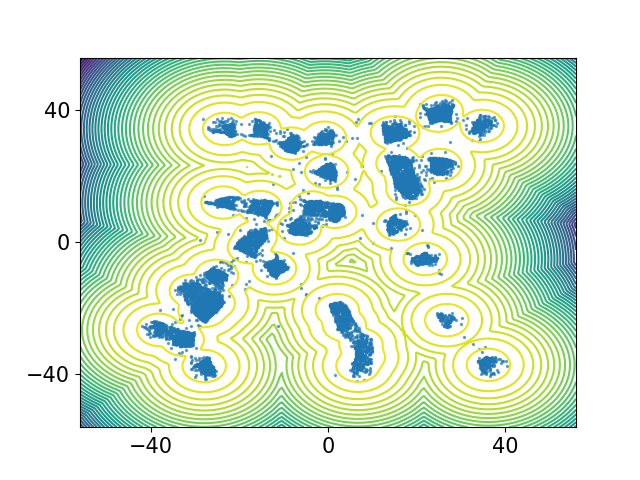}
        \caption{VarDiU Gaussian}
    \end{subfigure}
    \begin{subfigure}[b]{0.24\textwidth}
        \centering
        \includegraphics[width=\linewidth]{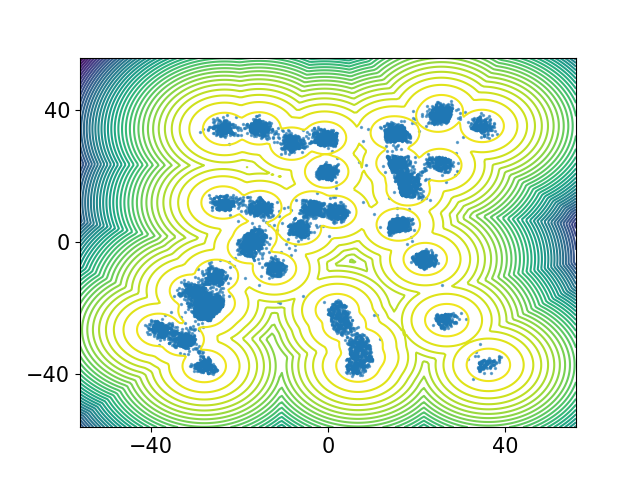}
        \caption{VarDiU NSF}
    \end{subfigure}
    \begin{subfigure}[b]{0.24\textwidth}
        \centering
        \includegraphics[width=\linewidth]{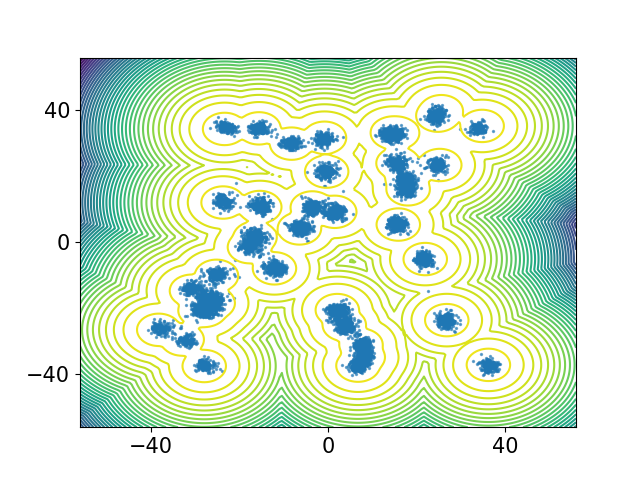}
        \caption{Pre-trained EDM}
    \end{subfigure}
    \begin{subfigure}[b]{0.24\textwidth}
        \centering
        \includegraphics[width=\linewidth]{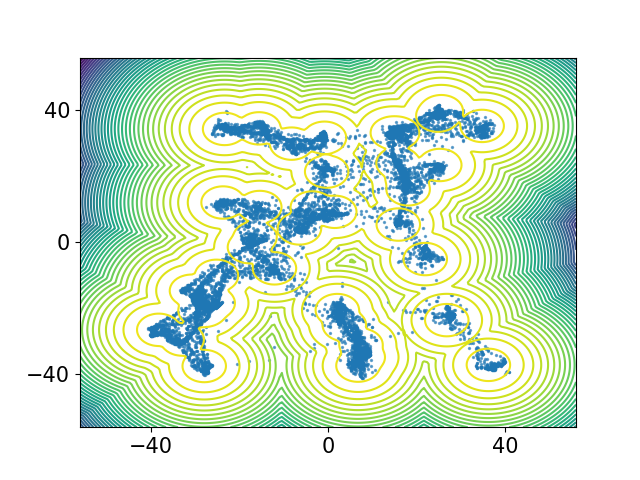}
        \caption{Diff-Instruct}
    \end{subfigure}
    \begin{subfigure}[b]{0.24\textwidth}
        \centering
        \includegraphics[width=\linewidth]{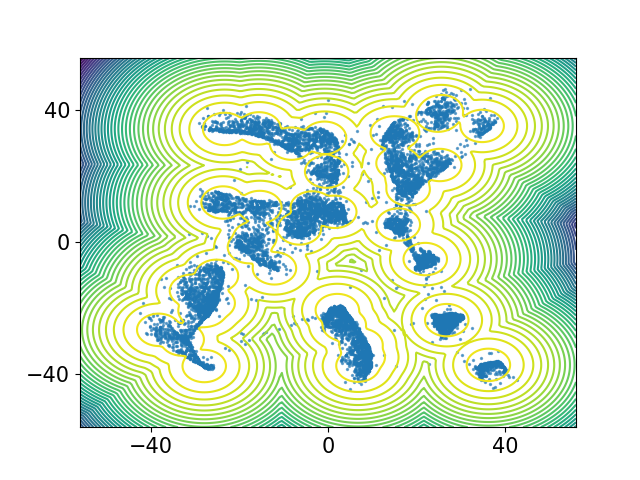}
        \caption{VarDiU Gaussian}
    \end{subfigure}
    \begin{subfigure}[b]{0.24\textwidth}
        \centering
        \includegraphics[width=\linewidth]{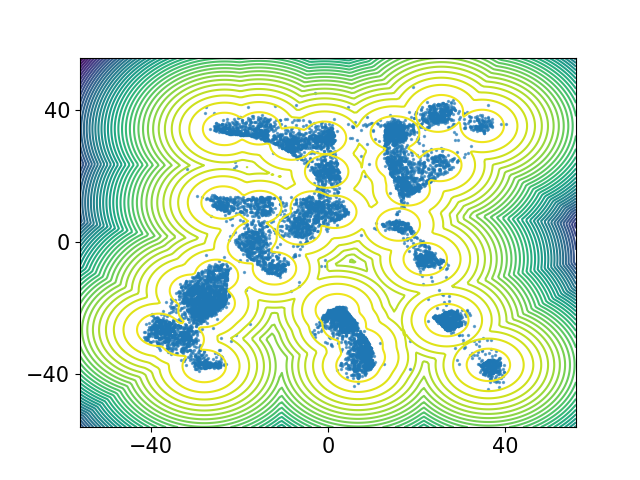}
        \caption{VarDiU NSF}
    \end{subfigure}
    \caption{The three rows correspond to the true score, training data, and learned score settings. Diff-Instruct models use 10 score-matching steps per generator step. Empirical EDM indicates the samples generated by EDM sampler with empirical scores. See~\cref{tab:truemog40,tab:edmmog40,tab:kdemog40} for evaluation results.}
    \label{fig:mog40samples}
\end{figure}
\begin{figure}[t]
    \centering
        \centering
\includegraphics[width=0.92\linewidth]{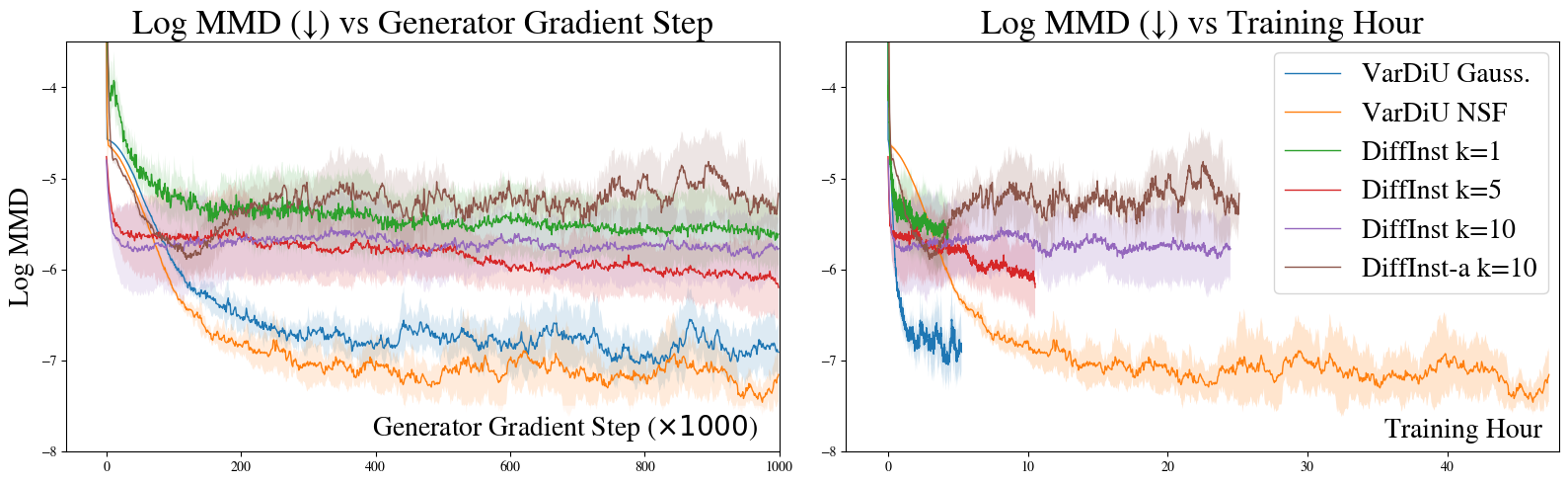}
        \caption{The left figure shows log-MMD over 1M generator steps, reflecting the efficiency of generator gradient estimation across different methods. The right figure shows total training time for 1M generator steps. In both, VarDiU clearly surpasses Diff-Instruct in quality, stability, and efficiency.}
\label{fig:speed_true}
\end{figure}

\textbf{Sample Quality}
We compare samples generated by one-step models trained with VarDiU and Diff-Instruct across three practical settings: using the true score, the training dataset, and a learned score (see \cref{fig:mog40samples} for visualizations). For Diff-Instruct, we train the model with 10 score-matching steps per generator gradient step to obtain the best sample quality we can, whereas the original Diff-Instruct paper~\cite{luo2023diff} uses only a single step. Both VarDiU-Gaussian and VarDiU-NSF produce cleaner and sharper samples than Diff-Instruct when using the true score or the training dataset. With a learned score, however, all methods yield different sub-optimal results due to biases in the provided scores. We also compare log-density and log-MMD across the samples (see \cref{tab:truemog40,tab:edmmog40,tab:kdemog40}). We can find VarDiU consistently performs best under reliable evaluations (true score and training data), demonstrating high accuracy and stability with tight confidence intervals, while Diff-Instruct lags despite larger steps. Under learned scores, evaluations are noisier: VarDiU achieves higher log-density but slightly worse MMD, making comparisons unreliable due to bias in the given pre-trained scores. Overall, VarDiU is more accurate and stable across settings.

\textbf{Training Stability and Efficiency}
We present plots of log-MMD against both generator gradient iterations and total training time in \cref{fig:speed_true}; additional results are provided in Appendix \cref{fig:true_scorelogpd,fig:speed_fake,fig:speed_emp}. Each curve shows the mean of 10 independent runs, with shaded regions indicating one standard deviation. For VarDiU, we apply a noise scheme with an annealing schedule. For Diff-Instruct, we use both the same annealing schedule (denoted as Diff-Instruct-a) and a non-annealing variant (see \cref{app:schedule} for details). In the log-MMD vs. generator gradient step plot, we find that under the same generator gradient budget (1M steps), VarDiU methods significantly outperform Diff-Instruct in both MMD values and stability (smaller variance). Moreover, incorporating a normalizing flow further improves performance compared to the Gaussian posterior, highlighting the importance of accurate generator gradient estimation. In the log-MMD vs. training time plot, we observe that under the same training hours, VarDiU-Gaussian clearly outperforms all other methods.

\section{Conclusion and Future Work}
In this paper, we introduced a novel method for diffusion distillation by proposing a variational diffusive upper bound on the DiKL divergence whose gradient can be estimated without bias. Our bound extends the Reverse KL upper bound from~\citep{zhang2019variational} to the diffusion setting and further eliminates the need to estimate the joint entropy. We also adapt this bound naturally to scenarios where only the teacher model's score function is accessible. Additionally, we establish a connection between our method and the information maximisation (IM~\cite{barber2004algorithm}) framework, see Appendix~\ref{app:related} for a discussion on related works. Empirically, we demonstrate on a toy problem that our method achieves improved one-step generation quality while significantly reducing training time. 

As future work, we are extending this approach to higher-dimensional data, such as image or video generation, which we believe to be a promising direction.

\clearpage
\newpage

{
\small
\bibliographystyle{abbrv}
\bibliography{SPIGM}
}

\clearpage
\appendix
\begin{center}
\LARGE
\textbf{Appendix for ``VarDiU: A Variational Diffusive Upper Bound for One-Step Diffusion Distillation"}
\end{center}

\etocdepthtag.toc{mtappendix}
\etocsettagdepth{mtchapter}{none}
\etocsettagdepth{mtappendix}{subsection}
{\small \tableofcontents}

\section{Proof of Variational Upper Bound}
\label{app:bound_proof}

\begin{proposition}[Reverse KL Variational Upper Bound \citep{zhang2019variational}]
\label{prop:var_upper_bound}
For any choice of auxiliary variational distribution \(q^{(t)}_{\phi}(z\mid x_t)\),
\begin{align}
\mathrm{KL}\!\left(p^{(t)}_{\theta}(x_t)\,\|\,p^{(t)}_{d}(x_t)\right)
&\leq \mathrm{KL}\!\left(p^{(t)}_{\theta}(x_t\mid z)p(z)\,\|\,p^{(t)}_{d}(x_t)q^{(t)}_{\phi}(z\mid x_t)\right)
=:\;\mathrm{U}^{(t)}(\theta,\phi).
\end{align}
Moreover, equality holds iff \(q^{(t)}_{\phi}(z\mid x_t)=p^{(t)}_{\theta}(z\mid x_t)\) for 
\(p^{(t)}_{\theta}(x_t)\)-almost every \(x_t\).
\end{proposition}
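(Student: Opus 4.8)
The plan is to prove the bound by decomposing the joint KL divergence $\mathrm{U}^{(t)}(\theta,\phi)$ into the marginal KL divergence plus a manifestly non-negative remainder, using the chain rule for relative entropy; the non-negativity of that remainder is exactly where Jensen's inequality enters (via convexity of $-\log$, i.e.\ Gibbs' inequality).

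First I would identify the two arguments of $\mathrm{U}^{(t)}(\theta,\phi)$ as joint densities on the product space of $(x_t,z)$. The first factorises as $p^{(t)}_{\theta}(x_t\mid z)p(z)$ and, by definition of the diffused student, its $x_t$-marginal is $p^{(t)}_{\theta}(x_t)=\int p^{(t)}_{\theta}(x_t\mid z)p(z)\,\mathrm{d}z$; the second factorises as $p^{(t)}_d(x_t)q^{(t)}_{\phi}(z\mid x_t)$ with $x_t$-marginal $p^{(t)}_d(x_t)$. Since $p^{(t)}_{\theta}(x_t\mid z)$ is the Gaussian diffusion kernel $q(x_t\mid g_{\theta}(z))$, the marginal $p^{(t)}_{\theta}(x_t)$ is a Gaussian mixture and hence admits a density even when $p_{\theta}(x_0)$ is degenerate, so the true posterior $p^{(t)}_{\theta}(z\mid x_t)=p^{(t)}_{\theta}(x_t\mid z)p(z)/p^{(t)}_{\theta}(x_t)$ is well defined.

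Next I would rewrite the likelihood ratio inside $\mathrm{U}^{(t)}$ by multiplying and dividing by $p^{(t)}_{\theta}(x_t)\,p^{(t)}_{\theta}(z\mid x_t)$, which gives the factorisation
\[
\frac{p^{(t)}_{\theta}(x_t\mid z)p(z)}{p^{(t)}_d(x_t)\,q^{(t)}_{\phi}(z\mid x_t)}
=\frac{p^{(t)}_{\theta}(x_t)}{p^{(t)}_d(x_t)}\cdot\frac{p^{(t)}_{\theta}(z\mid x_t)}{q^{(t)}_{\phi}(z\mid x_t)}.
\]
Taking logs and integrating against the joint $p^{(t)}_{\theta}(x_t,z)=p^{(t)}_{\theta}(x_t\mid z)p(z)$, the first factor contributes $\mathrm{KL}(p^{(t)}_{\theta}(x_t)\,\|\,p^{(t)}_d(x_t))$ after marginalising out $z$, and the second contributes $\mathbb{E}_{p^{(t)}_{\theta}(x_t)}\!\big[\mathrm{KL}(p^{(t)}_{\theta}(z\mid x_t)\,\|\,q^{(t)}_{\phi}(z\mid x_t))\big]$, yielding
\[
\mathrm{U}^{(t)}(\theta,\phi)=\mathrm{KL}\!\big(p^{(t)}_{\theta}(x_t)\,\|\,p^{(t)}_d(x_t)\big)
+\mathbb{E}_{p^{(t)}_{\theta}(x_t)}\!\big[\mathrm{KL}\!\big(p^{(t)}_{\theta}(z\mid x_t)\,\|\,q^{(t)}_{\phi}(z\mid x_t)\big)\big].
\]
The inequality then follows since the second term is an expectation of non-negative KL divergences. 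For the equality case, I would argue that this expectation vanishes iff the inner KL is zero for $p^{(t)}_{\theta}(x_t)$-almost every $x_t$, which by strict positivity of KL away from equality is exactly $q^{(t)}_{\phi}(z\mid x_t)=p^{(t)}_{\theta}(z\mid x_t)$ for $p^{(t)}_{\theta}(x_t)$-a.e.\ $x_t$.

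There is no serious obstacle here; the only thing requiring care is the measure-theoretic bookkeeping — confirming $p^{(t)}_{\theta}(x_t)$ has a density so that the true posterior and the chain-rule decomposition are legitimate (the Gaussian-smoothing remark handles this), and stating the "almost every" qualifier with respect to the correct marginal $p^{(t)}_{\theta}(x_t)$, since the remainder term is only integrated against that marginal and hence constrains $q^{(t)}_{\phi}$ only outside $p^{(t)}_{\theta}(x_t)$-null sets.
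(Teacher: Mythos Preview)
Your proposal is correct and follows essentially the same argument as the paper: both decompose the joint KL via the chain rule into the marginal KL plus $\mathbb{E}_{p^{(t)}_{\theta}(x_t)}\big[\mathrm{KL}(p^{(t)}_{\theta}(z\mid x_t)\,\|\,q^{(t)}_{\phi}(z\mid x_t))\big]$, then invoke non-negativity of KL for the inequality and its strict positivity for the equality case. Your additional remarks on the Gaussian smoothing guaranteeing a density for $p^{(t)}_{\theta}(x_t)$ and the ``almost every'' qualifier are slightly more careful than the paper's presentation, but the substance is identical.
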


\begin{proof}
We start from the joint KL:
\begin{align}
\mathrm{KL}&\!\left(p^{(t)}_{\theta}(x_t\mid z)p(z)\,\Big\|\,p^{(t)}_{d}(x_t)\,q^{(t)}_{\phi}(z\mid x_t)\right) \\
&= \iint p^{(t)}_{\theta}(x_t\mid z)p(z)\,
\log\frac{p^{(t)}_{\theta}(x_t\mid z)p(z)}{p^{(t)}_{d}(x_t)\,q^{(t)}_{\phi}(z\mid x_t)}\,\mathrm{d}x_t\,\mathrm{d}z \\
&= \iint p^{(t)}_{\theta}(x_t\mid z)p(z)\,
\big[\log p^{(t)}_{\theta}(x_t\mid z)-\log p^{(t)}_{d}(x_t)\big]\,\mathrm{d}x_t\,\mathrm{d}z \nonumber \\
&\qquad\qquad\qquad\qquad
\qquad+\iint p^{(t)}_{\theta}(x_t\mid z)p(z)\,
\big[\log p(z)-\log q^{(t)}_{\phi}(z\mid x_t)\big]\,\mathrm{d}x_t\,\mathrm{d}z \\
&= \int p^{(t)}_{\theta}(x_t)\,\log\frac{p^{(t)}_{\theta}(x_t)}{p^{(t)}_{d}(x_t)}\,\mathrm{d}x_t
+ \iint p^{(t)}_{\theta}(x_t\mid z)p(z)\,
\log\frac{p^{(t)}_{\theta}(z\mid x_t)}{q^{(t)}_{\phi}(z\mid x_t)}\,\mathrm{d}x_t\,\mathrm{d}z \\
&= \mathrm{KL}\!\left(p^{(t)}_{\theta}(x_t)\,\|\,p^{(t)}_{d}(x_t)\right)
+ \mathbb{E}_{x_t\sim p^{(t)}_{\theta}(x_t)}\!
\left[\mathrm{KL}\!\left(p^{(t)}_{\theta}(z\mid x_t)\,\|\,q^{(t)}_{\phi}(z\mid x_t)\right)\right].
\end{align}

Since the conditional KL is non-negative, it follows that
\begin{align}
\mathrm{KL}\!\left(p^{(t)}_{\theta}(x_t)\,\|\,p^{(t)}_{d}(x_t)\right)
&\leq \mathrm{KL}\!\left(p^{(t)}_{\theta}(x_t\mid z)p(z)\,\|\,p^{(t)}_{d}(x_t)q^{(t)}_{\phi}(z\mid x_t)\right) =:\;\mathrm{U}^{(t)}(\theta,\phi).
\end{align}
Equality holds iff \(q^{(t)}_{\phi}(z\mid x_t)=p^{(t)}_{\theta}(z\mid x_t)\) 
for \(p^{(t)}_{\theta}(x_t)\) almost every \(x_t\).
\end{proof}

\section{Proof of VarDiU Gradient Estimator}
\label{app:proofgrad}
\begin{proposition}
The gradient can be estimated by
\begin{align}
    \nabla_{{\theta}} \int p^{(t)}_{\theta}(x_t) \log p^{(t)}_d(x_t) \mathrm{d}x_t= \nabla_{{\theta}} \int p^{(t)}_\theta(x_t) \left(x_t^\top [\nabla_{x_t}\log p^{(t)}_d(x_t)]_{\text{sg}}\right) \mathrm{d}x_t,
\label{eq:grad}
\end{align}
where $[\,\cdot\,]_{\mathrm{sg}}$ denotes the stop-gradient operator.
\end{proposition}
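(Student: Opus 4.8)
The plan is to rewrite both integrals using the reparametrisation trick for the student marginal and then compare the two resulting expectations directly. Since $p_\theta(x_0)=\int\delta(x_0-g_\theta(z))p(z)\,dz$ and the forward kernel $q(x_t\mid x_0)=\mathcal{N}(x_t\mid\sqrt{\bar\alpha_t}x_0,(1-\bar\alpha_t)I)$ is Gaussian, the student marginal $p^{(t)}_\theta(x_t)=\int q(x_t\mid g_\theta(z))p(z)\,dz$ admits the sampling representation $x_t=x_t(\theta,z,\epsilon):=\sqrt{\bar\alpha_t}\,g_\theta(z)+\sqrt{1-\bar\alpha_t}\,\epsilon$ with $z\sim p(z)$ and $\epsilon\sim\mathcal{N}(0,I)$. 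Hence for any integrable test function $h$ we have $\int p^{(t)}_\theta(x_t)h(x_t)\,dx_t=\mathbb{E}_{z,\epsilon}[h(x_t(\theta,z,\epsilon))]$, and the key observation is that the Jacobian $\partial x_t/\partial\theta=\sqrt{\bar\alpha_t}\,\partial g_\theta(z)/\partial\theta$ is independent of which $h$ is inserted; note also that the teacher marginal $p^{(t)}_d$ carries no $\theta$-dependence.

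Next I would differentiate the left-hand side. Taking $h=\log p^{(t)}_d$, exchanging $\nabla_\theta$ with the expectation, and applying the chain rule (using that $p^{(t)}_d$ does not depend on $\theta$),
\[
\nabla_\theta\int p^{(t)}_\theta(x_t)\log p^{(t)}_d(x_t)\,dx_t
=\mathbb{E}_{z,\epsilon}\!\left[\left(\frac{\partial x_t}{\partial\theta}\right)^{\!\top}\!\nabla_{x_t}\log p^{(t)}_d(x_t)\right].
\]
Then I would differentiate the right-hand side. Writing $c(x_t):=[\nabla_{x_t}\log p^{(t)}_d(x_t)]_{\mathrm{sg}}$, the stop-gradient means that although $c$ is evaluated at $x_t(\theta,z,\epsilon)$, it is treated as a constant when differentiating in $\theta$, so only the explicit linear factor $x_t$ contributes. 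Exchanging $\nabla_\theta$ with the expectation and using $\nabla_\theta(x_t^\top c)=(\partial x_t/\partial\theta)^\top c$,
\[
\nabla_\theta\int p^{(t)}_\theta(x_t)\bigl(x_t^\top c(x_t)\bigr)\,dx_t
=\mathbb{E}_{z,\epsilon}\!\left[\left(\frac{\partial x_t}{\partial\theta}\right)^{\!\top}\!\nabla_{x_t}\log p^{(t)}_d(x_t)\right],
\]
where after differentiation the stop-gradient label is dropped and $c(x_t)$ is simply the value $\nabla_{x_t}\log p^{(t)}_d(x_t)$. The two displays coincide, which proves the claim.

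The only non-bookkeeping point, and hence the main obstacle, is justifying the interchange of $\nabla_\theta$ and the integral in both computations; this needs mild regularity (for instance $g_\theta\in C^1$ in $\theta$, $\log p^{(t)}_d\in C^1$, and a locally integrable dominating function for the integrand and its $\theta$-derivative), the same type of assumption already implicit in the derivation of~\eqref{eq:diklgrad}. I would also stress that the statement concerns gradients only: the two integrands are not equal pointwise (e.g. $\log p^{(t)}_d(x_t)\neq x_t^\top\nabla_{x_t}\log p^{(t)}_d(x_t)$ already for a standard Gaussian teacher), so the ``equivalence up to a constant'' phrasing accompanying this substitution should be read as ``produces the same $\theta$-gradient'', which is exactly what is required in order to use the surrogate objective inside an automatic-differentiation pipeline.
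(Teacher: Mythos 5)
Your proof is correct and follows essentially the same route as the paper's appendix: reparametrise $x_t$ through $g_\theta(z)$ and the noise, move $\nabla_\theta$ inside the expectation, and apply the chain rule with $\partial x_t/\partial\theta$, exploiting that $p^{(t)}_d$ carries no $\theta$-dependence. You are slightly more explicit than the paper in also differentiating the right-hand side under the stop-gradient and confirming the two match (the paper stops after deriving the left-hand side's gradient and declares the claim), and you use the VP parametrisation $x_t=\sqrt{\bar\alpha_t}g_\theta(z)+\sqrt{1-\bar\alpha_t}\epsilon$ where the appendix uses the EDM form $x_t=g_\theta(z)+\sigma_t\epsilon$, but neither difference changes the substance.
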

\begin{proof}
Since \(x_t\) is generated via a differentiable transformation
\begin{equation}
    x_t = \mathcal{F}_{\theta}(g_{\theta}(z), \bepsilon,t) = g_{\theta}(z) + \bepsilon \sigma_t,
\end{equation}
where \(\bepsilon\) is drawn from a noise distribution \(p_{\bepsilon}(\bepsilon)\) that is independent of \(\theta\).
Then we have
\begin{align}
     & \iint p^{(t)}_{\theta}(x_t|z)p(z) \log p^{(t)}_d(x_t)\,\mathrm{d}x_t\,\mathrm{d}z 
     \nonumber\\
     & =\iint p(z)p_{\bepsilon}(\bepsilon) \log p^{(t)}_d\Bigl(\mathcal{F}_{\theta}(g_{\theta}(z), \bepsilon,t)\Bigr) \mathrm{d}\bepsilon \mathrm{d}z= \mathbb{E}_{z\sim p(z),\,\bepsilon\sim p_{\bepsilon}(\bepsilon)}\Bigl[\log p^{(t)}_{d}\Bigl(\mathcal{F}_{\theta}(g_{\theta}(z), \bepsilon,t)\Bigr)\Bigr]
\end{align}
Since \(\mathcal{F}_{\theta}(g_{\theta}(z), \bepsilon,t)\) is differentiable in $\theta$ and under appropriate regularity conditions, we have
\begin{equation}
   \nabla_{{\theta}} \iint p^{(t)}_{\theta}(x_t|z)p(z) \log p^{(t)}_d(x_t) \mathrm{d}z \mathrm{d}x_t  = \mathbb{E}_{z\sim p(z),\,\bepsilon\sim p_{\bepsilon}(\bepsilon)}\Biggl[\nabla_{\theta} \log p^{(t)}_d\bigl(\mathcal{F}_{\theta}(g_{\theta}(z), \bepsilon,t)\bigr)\Biggr]
\end{equation}
Using the chain rule, the gradient inside the expectation becomes
\begin{align}
    &\nabla_{\theta} \log p^{(t)}_d\bigl(\mathcal{F}_{\theta}(g_{\theta}(z), \bepsilon,t)\bigr) = \nabla_{x_t} \log p^{(t)}_d(x_t)\bigg|_{x_t=\mathcal{F}_{\theta}(g_{\theta}(z), \bepsilon,t)} \cdot \frac{\partial \mathcal{F}_{\theta}(g_{\theta}(z), \bepsilon,t)}{\partial \theta}
\end{align}
Substituting back into our expression for the gradient, we obtain
\begin{align}
    &\nabla_{{\theta}} \iint p^{(t)}_{\theta}(x_t|z)p(z) \log p^{(t)}_d(x_t) \mathrm{d}x_t\mathrm{d}z  \nonumber\\
    &= \mathbb{E}_{z\sim p(z),\,\bepsilon\sim p_{\bepsilon}(\bepsilon)}\Biggl[\nabla_{x_t} \log p^{(t)}_d(x_t)\bigg|_{x_t=\mathcal{F}_{\theta}(g_{\theta}(z), \bepsilon,t)}\cdot\frac{\partial \mathcal{F}_{\theta}(g_{\theta}(z), \bepsilon,t)}{\partial \theta}\Biggr]\\
    &=\mathbb{E}_{\substack{z \sim p({z}), x_t \sim p^{(t)}_{\theta}(x_t|z)}}\Biggl[\nabla_{x_t} \log p^{(t)}_d(x_t)\frac{\partial x_t}{\partial \theta}\Biggr] = \iint p^{(t)}_{\theta}(x_t|z)p(z) \nabla_{x_t} \log p^{(t)}_d(x_t)\frac{\partial x_t}{\partial {\theta}}\mathrm{d}x_t\mathrm{d}z ,
\end{align}
as required.
\end{proof}

\section{A Practically Equivalent Variational Diffusive Upper Bound Loss}
\label{eq:eqloss}

\paragraph{Tweedie’s formula~\cite{efron2011tweedie,robbins1992empirical}.}
For the Gaussian corruption kernel \(p_t(x_t\mid  x)=\mathcal{N}(x_t; x,\sigma_t^2\mathbf I)\),
Tweedie’s identity gives
\begin{equation}
\nabla_{x_t}\log p^{(t)}_d(x_t)
=\frac{\mu(x_t;t)-x_t}{\sigma_t^2},
\qquad
\mu(x_t;t):=\E_{p_d( x\mid x_t)}[ x\mid x_t].
\label{eq:tweedie}
\end{equation}

We have from \cref{eq:diuexpand}
\begin{align}
    &\mathcal{L}({\phi},{\theta}) = - \int_{0}^1 \omega(t)p_\theta^{(t)}(x_t,z)\bigg[x_t^\top [\nabla_{x_t}\log p^{(t)}_d(x_t)]_{\text{sg}} + \log q^{(t)}_{\phi}(z|x_t;t)\bigg]\mathrm{d}t + const
    \nonumber \\
    &=-\int_{0}^1 \omega(t)p_\theta^{(t)}(x_t,z)\bigg[{g}_{\theta}(z)^\top \left[\frac{{\mu}(x_t;t)-{g}_{\theta}(z)}{\sigma_t^2}\right]_{\text{sg}} + \log q^{(t)}_{\phi}(z|x_t;t)\bigg]\mathrm{d}t + const.
\end{align}
\begin{proof}
We are given the VarDiU objective:
\begin{align}
\mathcal{L}(\phi,\theta)
&= - \int_{0}^1 \!\omega(t)\;p_\theta^{(t)}(x_t,z)\!\Big[x_t^\top \big[\nabla_{x_t}\log p^{(t)}_d(x_t)\big]_{\mathrm{sg}} + \log q^{(t)}_{\phi}(z\mid x_t;t)\Big]\mathrm{d}t + \mathrm{const}.
\end{align}
With the stop-gradient applied to the score term,
\begin{align}
x_t^\top \big[\nabla_{x_t}\log p^{(t)}_d(x_t)\big]_{\mathrm{sg}}
&= \frac{1}{\sigma_t^2}\,x_t^\top\big[\mu(x_t;t)-x_t\big]_{\mathrm{sg}} \nonumber\\
&= \frac{1}{\sigma_t^2}\Big\{ x^\top\big[\mu(x_t;t)- x\big]_{\mathrm{sg}}
+ (x_t- x)^\top\big[\mu(x_t;t)-x_t\big]_{\mathrm{sg}}\Big\}.
\label{eq:split}
\end{align}
Write \(x_t= x+\sigma_t\varepsilon\) with \(\varepsilon\sim\mathcal N(0,\mathbf I)\), and note that
\(\big[\mu(x_t;t)-x_t\big]_{\mathrm{sg}}\) is treated as a constant w.r.t. \((\phi,\theta)\).
Taking the expectation over \(x_t\mid  x\) of \eqref{eq:split} yields
\begin{align}
\E_{x_t\mid  x}\!\Big[x_t^\top \big[\nabla_{x_t}&\log p^{(t)}_d(x_t)\big]_{\mathrm{sg}}\Big]\nonumber\\
=& \frac{1}{\sigma_t^2}\,\E_{x_t\mid  x}\!\Big[ x^\top\big[\mu(x_t;t)- x\big]_{\mathrm{sg}}\Big]
+ \frac{1}{\sigma_t^2}\,\E_{x_t\mid  x}\!\Big[(x_t- x)^\top\big[\mu(x_t;t)-x_t\big]_{\mathrm{sg}}\Big].
\label{eq:two-terms}
\end{align}
The first term on the right is exactly
\( x^\top\big[\big(\mu(x_t;t)- x\big)/\sigma_t^2\big]_{\mathrm{sg}}\).
The second term decomposes as
\[
\E_{x_t\mid  x}\!\Big[(x_t- x)^\top\big[\mu(x_t;t)\big]_{\mathrm{sg}}\Big]
-\E_{x_t\mid  x}\!\big[\|x_t- x\|^2\big].
\]
Because \((x_t- x)=\sigma_t\varepsilon\) with \(\varepsilon\) independent of \((\phi,\theta)\) and
\(\big[\mu(\cdot;t)\big]_{\mathrm{sg}}\) blocks all gradients through its argument,
this entire bracket has \emph{zero gradient} w.r.t. \((\phi,\theta)\); in particular,
\(\E_{x_t\mid x}\!\big[\|x_t- x\|^2\big]=d\,\sigma_t^2\) (with \(d\) the data dimension) and
\(\E_{x_t\mid  x}\!\big[(x_t- x)^\top[\mu(x_t;t)]_{\mathrm{sg}}\big]\) does not contribute
to parameter gradients under reparametrisation. Hence it can be absorbed into ``\(\mathrm{const}\)''.

Combining, we obtain (up to a parameter-independent constant)
\begin{align}
\E_{x_t\mid  x}\!\Big[x_t^\top \big[\nabla_{x_t}\log p^{(t)}_d(x_t)\big]_{\mathrm{sg}}\Big]
=  x^\top \Big[\frac{\mu(x_t;t)- x}{\sigma_t^2}\Big]_{\mathrm{sg}} + \mathrm{const}.
\end{align}
Substituting \( x={g}_\theta(z)\) and keeping the \(\log q^{(t)}_{\phi}(z\mid x_t;t)\) term unchanged gives
\begin{align}
\mathcal{L}(\phi,\theta)
&= -\int_{0}^1 \omega(t)\;
p_\theta^{(t)}(x_t,z)
\Big[{g}_{\theta}(z)^\top \Big[\frac{{\mu}(x_t;t)-{g}_{\theta}(z)}{\sigma_t^2}\Big]_{\mathrm{sg}}
+ \log q^{(t)}_{\phi}(z\mid x_t;t)\Big]\mathrm{d}t
+ \mathrm{const}, 
\end{align}
as required.  
\end{proof}

\section{Joint Entropy is Constant}
\label{sec:entropy_proof}

\begin{proposition}
\label{prop:joint}
Let 
\begin{equation}
p^{(t)}_{\theta}(x,z)=p(z)\,p_{\theta}^{(t)}(x| z)
\end{equation}
with a fixed prior \(p(z)\) and conditional
\(\mathcal N\bigl(x;\,g_{\theta}(z),\sigma^2_t{I}\bigr)\).  Then the joint entropy
\(\mathbb{H}(p_\theta^{(t)}(x_t,z))\) is independent of \(\theta\).
\end{proposition}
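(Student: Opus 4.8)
The plan is to invoke the chain rule for differential entropy together with the translation invariance of Gaussian entropy. Write
\[
\mathbb{H}\bigl(p^{(t)}_{\theta}(x,z)\bigr) = \mathbb{H}(p(z)) + \mathbb{E}_{z\sim p(z)}\bigl[\mathbb{H}(p^{(t)}_{\theta}(x\mid z))\bigr],
\]
where the second term is the conditional differential entropy. The first term involves only the fixed prior $p(z)$ and is therefore a constant independent of $\theta$, so it suffices to show the conditional term is $\theta$-independent as well.

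For each fixed $z$, the conditional $p^{(t)}_{\theta}(x\mid z) = \mathcal{N}\bigl(x;\,g_{\theta}(z),\,\sigma_t^2 I\bigr)$ is a Gaussian whose covariance $\sigma_t^2 I$ does not depend on $z$ or $\theta$; only its mean $g_{\theta}(z)$ does. Since the differential entropy of $\mathcal{N}(\mu,\Sigma)$ equals $\tfrac12\log\bigl((2\pi e)^d\det\Sigma\bigr)$ — equivalently, since differential entropy is invariant under translation, so that shifting $\sigma_t\epsilon$ (with $\epsilon\sim\mathcal{N}(0,I)$) by the constant vector $g_{\theta}(z)$ leaves entropy unchanged — we get $\mathbb{H}(p^{(t)}_{\theta}(x\mid z)) = \tfrac{d}{2}\log(2\pi e\,\sigma_t^2)$ for every $z$. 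Taking the expectation over $p(z)$ leaves this constant untouched, hence $\mathbb{H}\bigl(p^{(t)}_{\theta}(x,z)\bigr) = \mathbb{H}(p(z)) + \tfrac{d}{2}\log(2\pi e\,\sigma_t^2)$, which is manifestly independent of $\theta$. An equivalent route is a direct change of variables $(z,\epsilon)\mapsto(z,x)$ with $x = g_{\theta}(z) + \sigma_t\epsilon$: the Jacobian is block lower-triangular with diagonal blocks $I$ and $\sigma_t I$, so its absolute determinant is $\sigma_t^d$, independent of $\theta$, giving $\mathbb{H}(z,x) = \mathbb{H}(z,\epsilon) + d\log\sigma_t$ with $\mathbb{H}(z,\epsilon)$ fixed.

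There is no real obstacle here; the only points requiring mild care are the standard regularity conditions (measurability of $g_{\theta}$, finiteness of $\mathbb{H}(p(z))$) that validate the chain rule and the entropy formula, and the crux — which I would state explicitly — that \emph{all} of the $\theta$-dependence sits in the mean of a Gaussian with fixed covariance and therefore contributes nothing to the entropy. The remaining steps are one or two lines of bookkeeping.
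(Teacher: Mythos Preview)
Your proposal is correct and mirrors the paper's proof essentially line for line: both decompose the joint entropy via the chain rule into $\mathbb{H}(p(z)) + \mathbb{H}(p^{(t)}_\theta(x\mid z))$, note the first term is fixed, and then invoke the Gaussian entropy formula $\tfrac{d}{2}\log(2\pi e\,\sigma_t^2)$ (which depends only on the covariance, not the mean) to conclude the second term is $\theta$-independent. Your alternative change-of-variables argument is a nice extra but is not in the paper.
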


\begin{proof}
First, write the joint entropy as the sum of marginal and conditional entropies:
\begin{equation}
\mathbb{H}(p_{\theta}^{(t)}(x_t,z))
=-\iint p^{(t)}_{\theta}(x_t,z)\,\log p^{(t)}_{\theta}(x_t,z)\,\mathrm{d}x_t\,\mathrm{d}z
=\mathbb{H}(p(z))+\mathbb{H}(p_{\theta}^{(t)}(x_t|z)),
\end{equation}
where
\begin{equation}
\mathbb{H}(p(z))=-\!\int p(z)\,\log p(z)\,\mathrm{d}z,\\
\mathbb{H}(p_{\theta}^{(t)}(x_t|z))
=-\!\iint p(z)\,p^{(t)}_{\theta}(x_t| z)\,\log p^{(t)}_{\theta}(x_t| z)\,\mathrm{d}x_t\,\mathrm{d}z.
\end{equation}

By assumption, the prior \(p(z)\) does not depend on \(\theta\), so
\(\mathbb{H}(p(z))\) is a fixed number.

For each \(z\), \(x_t| z\sim\mathcal N(g_{\theta}(z),\sigma_t^2I)\).  The entropy of a Gaussian
depends only on its covariance:
\begin{equation}
\mathbb{H}\bigl(\mathcal N(\mu,\Sigma)\bigr)
=\tfrac12\log\bigl[(2\pi e)^d\det\Sigma\bigr].
\end{equation}
where $d$ is independent of $\theta$, Here \(\Sigma=\sigma_t^2I\), so
\begin{equation}
\mathbb{H}\bigl(\mathcal N(g_{\theta}(z),\sigma_t^2I)\bigr)
=\tfrac{d}{2}\,\log\bigl[(2\pi e)\sigma^2\bigr],
\end{equation}
which is independent of \(g_{\theta}(z)\).  Hence
\begin{equation}
\mathbb{H}(p_{\theta}^{(t)}(x_t|z))
=\!\int p(z)\,\mathbb{H}\bigl(\mathcal N(g_{\theta}(z),\sigma_t^2I)\bigr)\,\mathrm{d}z
=\tfrac{d}{2}\,\log\bigl[(2\pi e)\sigma_t^2\bigr],
\end{equation}
also constant.
Since both \(\mathbb{H}(p(z))\) and \(\mathbb{H}(p_{\theta}^{(t)}(x_t|z))\) are independent of \(\theta\), their sum
\(\mathbb{H}(p_\theta^{(t)}(x_t,z))\) is a constant.
\end{proof}

\section{Proof of \cref{thm:entropy}}
\label{app:entropy}
\begin{theorem}[Restatement of \cref{thm:entropy}]
By the chain rule of entropy, we have:
$\mathbb{H}(p^{(t)}_\theta(x_t, z)) = \mathbb{H}(p^{(t)}_\theta(z | x_t)) + \mathbb{H}(p^{(t)}_\theta(x_t))$.
Since \( \mathbb{H}(p^{(t)}_\theta(x_t, z)) \) is constant with respect to \(\theta\), it follows that:
\begin{align}
\max_{\theta} \mathbb{H}(p^{(t)}_\theta(x_t))
= \min_{\theta} \mathbb{H}(p^{(t)}_\theta(z|x_t))
\le \min_{\theta, \phi} \;
- \iint p^{(t)}_{\theta}(x_t, z) \log q^{(t)}_{\phi}(z | x_t) \,\mathrm{d}z\,\mathrm{d}x_t,
\end{align}
where \( q^{(t)}_{\phi}(z|x_t) \) is a variational approximation to the true posterior \( p^{(t)}_\theta(z|x_t) \).
\end{theorem}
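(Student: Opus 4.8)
The plan is to chain together three facts, each of which is either stated in the excerpt or elementary. First, invoke the chain rule of entropy, $\mathbb{H}(p^{(t)}_\theta(x_t,z)) = \mathbb{H}(p^{(t)}_\theta(z\mid x_t)) + \mathbb{H}(p^{(t)}_\theta(x_t))$, which follows from factoring the joint density as $p^{(t)}_\theta(x_t,z) = p^{(t)}_\theta(z\mid x_t)\,p^{(t)}_\theta(x_t)$ and splitting the logarithm inside the entropy integral. Second, apply \cref{prop:joint}: the left-hand side is a constant $C$ independent of $\theta$. Rearranging gives $\mathbb{H}(p^{(t)}_\theta(x_t)) = C - \mathbb{H}(p^{(t)}_\theta(z\mid x_t))$, so maximising the marginal entropy over $\theta$ is exactly equivalent to minimising the conditional entropy over $\theta$; this yields the first equality in the displayed chain.

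Next I would establish the inequality $\mathbb{H}(p^{(t)}_\theta(z\mid x_t)) \le -\iint p^{(t)}_\theta(x_t,z)\log q^{(t)}_\phi(z\mid x_t)\,\mathrm{d}z\,\mathrm{d}x_t$ for any variational $q^{(t)}_\phi$. This is the standard ``cross-entropy dominates entropy'' bound: write the gap as $\mathbb{E}_{x_t\sim p^{(t)}_\theta}\bigl[\mathrm{KL}(p^{(t)}_\theta(z\mid x_t)\,\|\,q^{(t)}_\phi(z\mid x_t))\bigr] \ge 0$, which expands precisely to the difference between the right-hand cross-entropy term and $\mathbb{H}(p^{(t)}_\theta(z\mid x_t))$. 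Since this holds for every $\phi$ and every $\theta$, it holds after taking $\min_{\theta,\phi}$ on the right while keeping $\min_\theta$ on the left — more carefully, for the fixed $\theta^\star$ attaining $\min_\theta \mathbb{H}(p^{(t)}_\theta(z\mid x_t))$ we have $\min_\theta \mathbb{H}(p^{(t)}_\theta(z\mid x_t)) = \mathbb{H}(p^{(t)}_{\theta^\star}(z\mid x_t)) \le -\iint p^{(t)}_{\theta^\star}(x_t,z)\log q^{(t)}_\phi(z\mid x_t) \le \min_{\theta,\phi}(\cdots)$, where the last step needs a small remark that the joint minimisation cannot exceed the value at $\theta^\star$ minimised over $\phi$ alone — actually the inequality $\min_{\theta,\phi} f(\theta,\phi) \ge$ anything is false in general, so I would instead state it as: for all $\theta,\phi$, $\mathbb{H}(p^{(t)}_\theta(z\mid x_t)) \le -\iint p^{(t)}_\theta(x_t,z)\log q^{(t)}_\phi(z\mid x_t)$, hence $\min_\theta \mathbb{H} \le \mathbb{H}(p^{(t)}_\theta(z\mid x_t)) \le \text{cross-entropy}(\theta,\phi)$ for all $\theta,\phi$, and taking the infimum over $(\theta,\phi)$ on the right preserves $\min_\theta\mathbb{H} \le \min_{\theta,\phi}\text{cross-entropy}$.

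Assembling: the first equality is the rearranged chain rule plus \cref{prop:joint}; the middle quantity $\min_\theta \mathbb{H}(p^{(t)}_\theta(z\mid x_t))$ is bounded above by the jointly-minimised cross-entropy by the KL-nonnegativity argument. The main obstacle — really the only subtle point — is the handling of the $\min$ operators in the last inequality: one must be careful that it is the fixed-$\theta$ statement ``entropy $\le$ cross-entropy for all $\phi$'' that is being combined with ``$\min_\theta$ entropy $\le$ entropy at any $\theta$'', and that taking $\inf_{\theta,\phi}$ on the larger side is legitimate. I would write this out explicitly rather than gesturing at it, since a careless reading could suggest the false claim $\min_{\theta,\phi} F \ge G$. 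Everything else is routine manipulation of entropies and a single application of Jensen's inequality (equivalently, nonnegativity of KL).
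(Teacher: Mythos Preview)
Your proposal is correct and follows essentially the same approach as the paper: chain rule of entropy plus \cref{prop:joint} for the equality, then the ``cross-entropy $\ge$ entropy'' identity via nonnegativity of $\mathrm{KL}(p^{(t)}_\theta(z\mid x_t)\,\|\,q^{(t)}_\phi(z\mid x_t))$ for the inequality. Your explicit treatment of the $\min$ operators (showing $\min_\theta \mathbb{H}\le \text{cross-entropy}(\theta,\phi)$ for all $(\theta,\phi)$ and then passing to the infimum on the right) is in fact more careful than the paper's own write-up, which simply asserts the final chain.
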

\begin{proof}
We have 
\begin{align}
 \mathbb{H}(p^{(t)}_{\theta}(z|x_t))&=-\iint \log p_{\theta}(z|x_t) p_{\theta}(z|x_t)p_{\theta}^{(t)}(x_t)\mathrm{d} z\mathrm{d}x_t\\
    &=-\iint \log \left(\frac{p_{\theta}(z|x_t)}{q^{(t)}_{\phi}(z|x_t)}q^{(t)}_{\phi}(z|x_t) \right) p^{(t)}_{\theta}(x_t|z)p(z)\mathrm{d} z\mathrm{d}x_t\\
    &= -\iint \log \frac{p_{\theta}(z|x_t)}{q^{(t)}_{\phi}(z|x_t)}  p^{(t)}_{\theta}(x_t|z)p(z)\mathrm{d} z\mathrm{d}x_t-\iint \log q^{(t)}_{\phi}(z|x_t) p^{(t)}_{\theta}(z|x_t)p^{(t)}_{\theta}(x_t)\mathrm{d} z\mathrm{d}x_t\\
    &=-\underbrace{\mathrm{KL}(p_{\theta}(z|x_t)||q_{\phi}^{(t)}(z|x_t))}_{\geq 0}-\int \log q^{(t)}_{\phi}(z|x_t) p_{\theta}(z|x_t)p^{(t)}_{\theta}(x_t)\mathrm{d} z \mathrm{d}x_t.
\end{align}
Therefore, 
$$\mathbb{H}(p^{(t)}_{\theta}(z|x_t))\leq -\iint \log q^{(t)}_{\phi}(z|x_t) p_{\theta}(z|x_t)p_{\theta}^{(t)}(x_t)\mathrm{d} z\mathrm{d}x_t$$ and the bound is tight when $\mathrm{KL}(p_{\theta}(z|x_t)||q^{(t)}_{\phi}(z|x_t))=0$.
To minimise $\mathrm{KL}(p_{\theta}(z|x_t)||q^{(t)}_{\phi}(z|x_t))$ wrt $\phi$, we have
\begin{align}
    &\min_{\phi} \mathrm{KL}(p_{\theta}^{(t)}(z|x_t)||q^{(t)}_{\phi}(z|x_t))=  \min_{\phi} \iint \log q_{\phi}(z|x_t)p^{(t)}_{\theta}(x_t|z)p(z)\mathrm{d} z\mathrm{d}x_t.\nonumber
    \\
    \Rightarrow&\max_{\theta} \mathbb{H}(p_\theta^{(t)}(x_t))= \min_{\theta} \mathbb{H}(p^{(t)}_{\theta}(z|x_t)) \leq \min_{\theta} \min_{\phi} -\iint \log q^{(t)}_{\phi}(z|x_t) p(z)p^{(t)}_{\theta}(x_t|z)\mathrm{d} z\mathrm{d}x_t
\end{align}
where the last line since Proposition \ref{prop:joint}
\begin{align}
    \underbrace{\mathbb{H}(p_\theta^{(t)}(x_t,z))}_{\mathrm{const.}}=\mathbb{H}(p_\theta^{(t)}(x_t))+\mathbb{H}(p^{(t)}_{\theta}(z|x_t))
    \nonumber 
    \Rightarrow \mathbb{H}(p_\theta^{(t)}(x_t)) = -\mathbb{H}(p^{(t)}_{\theta}(z|x_t))+\mathrm{const.}
\end{align}
\end{proof}
\section{Experimental Details}
\label{app:exp_details}
In this section, we present the details of the experimental setting.
\subsection{Implementation Details}
\label{app:imp_detail}
For the generator $g_{\theta}$, we use a 5-layer MLP with latent dimension 2, hidden dimension 400 and SiLU activation. We use Adam to train the generator with learning rate $10^{-4}$, batch size $1024$, and gradient norm clip $10.0$.

We employ the same 5-layer MLP with hidden dimension 400, and SiLU activation as well to predict the mean and log-variance of the Gaussian posterior. We use Adam to train the generator and posterior network with learning rate $10^{-4}$, batch size $1024$, and gradient norm clip $10.0$.

For Diff-instruct \cite{luo2023diff}, we use the same 5-layer MLP with hidden dimension 400, and SiLU activation to parametrise the student score network. We use Adam to train the student score network with learning rate $5*10^{-5}$, batch size $1024$, and gradient norm clip $10.0$.

Both methods are trained for $1{,}000{,}000$ epochs until convergence. For the learned score, we train an EDM~\citep{karras2022edm} on $10{,}000$ observed samples, while the empirical score is estimated from $10{,}000$ observed samples. All experiments are conducted on \texttt{NVIDIA RTX 3090} GPUs.  

\paragraph{Maximum Mean Discrepancy Implementation Details}
For evaluation, we employ the Maximum Mean Discrepancy (MMD)~\citep{gretton2012kernel} with five Gaussian kernels of bandwidths $\{2^{-2}, 2^{-1}, 2^{0}, 2^{1}, 2^{2}\}$ for evaluation across multiple scales. 
Given two sets of samples $\{x_i\}_{i=1}^n \sim p_d$ and $\{y_j\}_{j=1}^m \sim p_\theta$, the squared MMD is defined as  
\begin{align}
    \mathrm{MMD}^2(p_d, p_\theta; k) 
    &= \mathbb{E}_{x,x' \sim p_d}[k(x,x')] 
    + \mathbb{E}_{y,y' \sim p_\theta}[k(y,y')] 
    - 2 \mathbb{E}_{x \sim p_d, y \sim p_\theta}[k(x,y)],
\end{align}
where $k(\cdot,\cdot)$ is the positive definite kernel. A smaller MMD indicates closer alignment between the generated and real distributions.  For the results in Table 1, we perform 10 independent training runs. Evaluation metrics are recorded every $1{,}000$ epochs, and at each evaluation step, we compute the mean and standard deviation across runs. We use $10,000$ samples for MMD evaluation. To summarise performance, we report the average of these statistics over the last $50$ evaluations.

\paragraph{VarDiU Posterior Parametrisation}
For VarDiU, we parametrise the Gaussian posterior as
\begin{equation}
     q_{\phi}^{(t)}(z| x_t;t)
=\mathcal{N}\Bigl(z;\,\mu_{\phi}(x_t;t),\,
\sigma_t^2\,\mathrm{diag}\!\bigl(\sigma_{\phi}^2(x_t;t)\bigr)\Bigr).
\end{equation}
The time-dependent scaling factor \( \sigma_t \) modulates the entropy of the base distribution to match the noise level at diffusion step \( t \). This allows the posterior to capture fine-grained detail at small \( t \), while maintaining flexibility to denoise highly corrupted samples at larger \( t \).

For flow-based posterior, we implement Neural Splines Flows \cite{durkan2019neural} using an open-source package \texttt{normflows}\footnote{https://github.com/VincentStimper/normalizing-flows} \cite{Stimper2023}, we choose the flow length to be 4. We use Adam to train the generator and flow-based posterior network with learning rate $10^{-4}$, batch size $1024$, and gradient norm clip $10.0$ as well.

\subsection{Noise Weighting and Scheduling}
\label{app:schedule}

\begin{figure}
    \centering
    \includegraphics[width=0.5\linewidth]{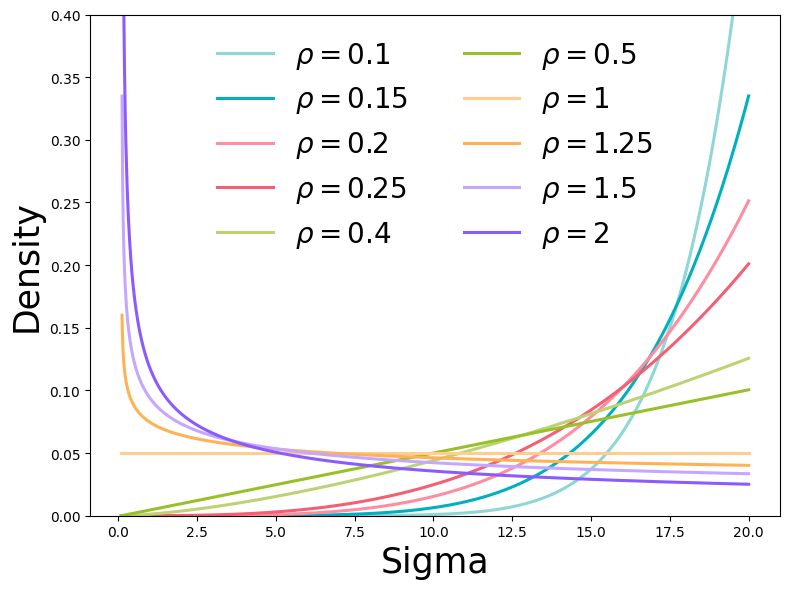}
    \caption{Density of sigma v.s different $\rho$. Here $\sigma_{\min}=0.1,\sigma_{\max}=20$.}
    \label{fig:sigma}
\end{figure}
The one-step generation procedure is given by
\begin{equation}
     x = \int \delta(x-g_{\theta}(z))\mathrm{d}x,
\end{equation}
where the latent prior is sampled as $z \sim \mathcal{N}(\mathbf{0}, \sigma^2_{\text{init}} \bI).$ We set $\sigma^2_{\text{init}} = 1$ in all toy data experiments.

For the forward diffusion distribution, we set
\begin{equation}
    q(x_t|x_0) = \mathcal{N}(x_t;\alpha_t x_0,\sigma_t^2 I)
\end{equation}
Unlike Variance Preserving (VP;\cite{song2019generative,ho2020denoising}) schedule which must satisfy $\alpha_t^2 + \sigma_t^2 =1$, we employ the schedule used in EDM~\cite{karras2022edm} which is a popular class of diffusion models, where we set $\alpha_t=1$ for all $t$ and we sample the time variable $t \sim \mathrm{Unif}[0,1]$ and define the corresponding noise levels as
\begin{equation}
\label{eq:schedule}
    \sigma_t = \sigma_{\min} + t^{\rho} \, (\sigma_{\max} - \sigma_{\min}).
\end{equation}
A visualisation of the resulting density of $\sigma_t$ under different values of $\rho$ is provided in \cref{fig:sigma}.  
For Diff-Instruct, we fix $\rho = 1.5$ across all three score estimation settings. We found an annealing $\rho$ does not help with the training dynamics of Diff-Instruct, so we use a fixed schedule.
\paragraph{Annealing $\rho$.}  
In  {VarDiU}, we adopt an annealing strategy in which $\rho$ gradually increases with training iterations. Specifically, $\rho$ is initialised at $\rho_{\text{init}} = 0.1$ and incremented by $0.01$ every $1000$ epochs until it reaches $\rho_{\text{end}}$. 

The annealed noise schedule plays a crucial role in stabilising training. The central idea is to initially place more emphasis on larger noise levels ($\sigma$), where the learning problem is smoother and less sensitive to estimation errors. As training progresses, the schedule gradually shifts towards smaller $\sigma$ values, allowing the model to refine details and capture higher-frequency structures. This coarse-to-fine strategy not only mitigates instability at the early stage but also improves convergence and sample quality in later stages.

\paragraph{(1) Given The True Score.}  
We set $\sigma_{\min} = 0.1$ and $\sigma_{\max} = 20$ for both  {Diff-Instruct} and  {VarDiU}.  
For  {VarDiU}, we set $\rho_{\text{end}} = 5.0$.
\paragraph{(2) Given A Pre-trained DM.}  
For  {Diff-Instruct}, we set $\sigma_{\min} = 1.1$ and $\sigma_{\max} = 40$.  
For  {VarDiU}, we set $\sigma_{\min} = 1.5$, $\sigma_{\max} = 40$, and $\rho_{\text{end}} = 2.0$.
\paragraph{(3) Given A Dataset.}  
We employ $10{,}000$ samples for empirical score estimation, with details provided in \cref{eq:emp_score}.  
For both  {Diff-Instruct} and  {VarDiU}, we set $\sigma_{\min} = 0.65$ and $\sigma_{\max} = 40$, and for  {VarDiU}, we additionally set $\rho_{\text{end}} = 2.0$.
\paragraph{Noise Weighting.}  
For both  {Diff-Instruct} and  {VarDiU}, we adopt the weighting function
\[
    \omega(t) = \frac{\sigma_t^2}{\sigma_{\max}^2}.
\]
The noise schedule aligns with \cite{zhang2025towards,luo2023diff}. We further divide the weighting by the $\sigma_{\max}$ for not changing the learning rate of training.
\subsection{Empirical Score Estimation}
\label{eq:emp_score}
\paragraph{Score of a Gaussian-smoothed empirical distribution.}
Let $\{x_0^{(n)}\}_{n=1}^N \subset \mathbb{R}^d$ be data points, and define the Gaussian kernel
\[
\phi_\sigma(x) := (2\pi\sigma^2)^{-d/2}
\exp\!\left(-\tfrac{\|x\|^2}{2\sigma^2}\right).
\]
The $\sigma$-smoothed empirical distribution is
\[
\hat p_{\theta}(x)
:= \frac{1}{N}\sum_{n=1}^N \mathcal{N}\!\bigl(x | x_0^{(n)}, \sigma^2 I\bigr)
= \frac{1}{N}\sum_{n=1}^N \phi_\sigma(x - x_0^{(n)}).
\]

\begin{lemma}[Log-sum gradient identity]
For positive functions $\{a_n(x)\}_{n=1}^N$,
\[
\nabla_{x} \log\!\Bigl(\sum_{n=1}^N a_n(x)\Bigr)
= \sum_{n=1}^N w_n(x)\,\nabla_{x} \log a_n(x),
\qquad
w_n(x) := \frac{a_n(x)}{\sum_{m=1}^N a_m(x)}.
\]
\end{lemma}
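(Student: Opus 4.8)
The plan is to reduce the statement to a direct application of the chain rule together with the standard formula for the gradient of a logarithm. First I would write $S(x) := \sum_{m=1}^N a_m(x)$, so that the left-hand side is $\nabla_x \log S(x) = S(x)^{-1}\,\nabla_x S(x)$ by the chain rule, which is valid wherever $S(x)>0$ (guaranteed here since each $a_n$ is positive). Next I would expand $\nabla_x S(x) = \sum_{n=1}^N \nabla_x a_n(x)$ by linearity of the gradient. The key algebraic move is then to rewrite each summand $\nabla_x a_n(x)$ as $a_n(x)\,\nabla_x \log a_n(x)$, which is again just the chain rule applied in reverse to $\log a_n$ (legitimate because $a_n(x)>0$).

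Putting these together gives
\begin{align}
\nabla_x \log S(x) = \frac{1}{S(x)}\sum_{n=1}^N \nabla_x a_n(x) = \frac{1}{S(x)}\sum_{n=1}^N a_n(x)\,\nabla_x \log a_n(x) = \sum_{n=1}^N \frac{a_n(x)}{S(x)}\,\nabla_x \log a_n(x),
\end{align}
and identifying $w_n(x) = a_n(x)/S(x)$ finishes the argument. I would close by noting that the $w_n(x)$ are nonnegative and sum to one, so the right-hand side is genuinely a convex combination of the individual score functions $\nabla_x \log a_n(x)$ — this is the form in which the lemma will be used to interpret the empirical score $\nabla_{x_t}\log\hat p_d(x_t)$ as a softmax-weighted average of the per-datapoint Gaussian scores.

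There is essentially no obstacle here: the only things to be careful about are the standing positivity assumption (so that all the logarithms and the division by $S(x)$ are well defined) and implicitly assuming each $a_n$ is differentiable at $x$, which is the natural reading of the statement since it asserts an identity between gradients. I would state these regularity conditions explicitly at the start of the proof rather than belabour them. The subsequent application to $a_n(x) = \phi_\sigma(x - x_0^{(n)})$, where $\nabla_x \log a_n(x) = -(x - x_0^{(n)})/\sigma^2$, is then immediate and I would record that specialisation right after the proof.
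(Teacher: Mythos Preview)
Your proposal is correct and matches the paper's proof essentially line for line: the paper also writes $\nabla_x \log(\sum_n a_n) = \frac{\sum_n \nabla_x a_n}{\sum_m a_m}$, then inserts the factor $\frac{a_n}{a_n}$ to pull out $\nabla_x \log a_n$ and identify the weights $w_n$. Your added remarks on positivity, differentiability, the convex-combination interpretation, and the Gaussian specialisation are all apt and mirror exactly how the paper uses the lemma immediately afterwards.
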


\begin{proof}
By the chain and quotient rules,
\[
\nabla_{x} \log\!\Bigl(\sum_{n} a_n\Bigr)
= \frac{\sum_n \nabla_{x} a_n}{\sum_m a_m}
= \sum_n \frac{a_n}{\sum_m a_m}\,\frac{\nabla_{x} a_n}{a_n}
= \sum_n w_n\,\nabla_{x} \log a_n.\qedhere
\]
\end{proof}

Applying the lemma with $a_n(x)=\phi_\sigma(x-x_0^{(n)})$ gives
\[
\nabla_{x} \log \hat p_{\theta}(x)
= \sum_{n=1}^N w_n(x;\sigma)\,\nabla_{x} \log \phi_\sigma(x-x_0^{(n)}),
\]
with responsibilities
\[
w_n(x;\sigma)
= \frac{\exp\!\bigl(-\|x-x_0^{(n)}\|^2/(2\sigma^2)\bigr)}
{\sum_{m=1}^N \exp\!\bigl(-\|x-x_0^{(m)}\|^2/(2\sigma^2)\bigr)}.
\]

The Gaussian score is
\[
\nabla_{x} \log \phi_\sigma(x-x_0^{(n)})
= -\frac{1}{\sigma^2}\bigl(x-x_0^{(n)}\bigr)
= \frac{x_0^{(n)}-x}{\sigma^2}.
\]

Therefore,
\[
\nabla_{x} \log \hat p_{\theta}(x)
= \sum_{n=1}^N w_n(x;\sigma)\,\frac{x_0^{(n)}-x}{\sigma^2}
= \frac{\sum_{n=1}^N w_n(x;\sigma)\,x_0^{(n)} - x}{\sigma^2}.
\]
\textbf{Final form.}
For any query point $x_t$, the score of the Gaussian–KDE is
\begin{equation}
    \;\nabla_{x_t} \log \hat p_{\theta}(x_t)
= \frac{\bar{x}(x_t;\sigma) - x_t}{\sigma^2} \; ; \qquad\qquad\qquad
    \bar{x}(x;\sigma) := \sum_{n=1}^N w_n(x;\sigma)\,x_0^{(n)}.
    \label{eq:empscore}
\end{equation}
We employ \cref{eq:empscore} to compute the empirical score using the given dataset practically. The accuracy of empirical score estimation tends to be small when a large number of samples are given, providing useful teacher guidance.

\subsection{Symmetric Sampling}
To reduce the variance of the stochastic estimator, we employ a symmetric sampling scheme \cite{botev2017variance,kroese2013handbook} for both VarDiU and Diff-Instruct.

We first draw a half-batch of latent codes,
\[
    z^{(i)} \sim \mathcal{N}(\mathbf{0}, \sigma_{\text{init}}\mathbf{I}), 
    \quad i = 1, \dots, \tfrac{B}{2},
\]
where $B$ is the batch size.  
We then duplicate the samples to construct a full batch,
\[
    z = \bigl(z^{(1)}, \dots, z^{(B/2)}, \; z^{(1)}, \dots, z^{(B/2)} \bigr).
\]
Passing these latents through the generator yields samples
\[
    x_0 = g_{\theta}(z).
\]
Next, we simulate forward diffusion process
\[
    \bepsilon^{(i)} \sim \mathcal{N}(\mathbf{0}, \mathbf{I}), 
    \quad i = 1, \dots, \tfrac{B}{2},
\]
and extend them symmetrically as
\[
    \bepsilon = \bigl(\bepsilon^{(1)}, \dots, \bepsilon^{(B/2)}, \; -\bepsilon^{(1)}, \dots, -\bepsilon^{(B/2)} \bigr).
\]
Thus, each noise vector is paired with its negative, ensuring that the perturbations are exactly centred.
We also sample noise levels from the power distribution
\[
    \sigma^{(i)} \sim p_\rho(\sigma), \quad i = 1, \dots, \tfrac{B}{2},
\]
and replicate them symmetrically:
\[
    \bsigma = \bigl(\sigma^{(1)}, \dots, \sigma^{(B/2)}, \; \sigma^{(1)}, \dots, \sigma^{(B/2)} \bigr).
\]

\textbf{Noisy inputs.}  
The perturbed inputs are then constructed as
\[
    x_t = x_0 + \bsigma \odot \bepsilon,
\]
where $\odot$ denotes element-wise multiplication.

We found symmetric sampling can reduce the variance during training, further enhance the training stability of VarDiU, and can help with stabilising the training with Diff-Instruct.
\section{Related Works}\label{app:related}
Diffusion acceleration has received significant attention as a means of reducing the number of steps required in the reverse process. At the core of this problem lies the posterior distribution $q(x_0 \mid x_t)$, which connects a noisy intermediate $x_t$ to the original data $x_0$. Accurate modelling of this posterior enables faithful reconstruction of $x_0$ in fewer steps, thereby lowering the sampling cost. Existing approaches can be broadly grouped into strategies such as designing alternative samplers~\cite{lu2022dpm,lu2025dpm,song2020denoising,de2021diffusion}, relaxing the Gaussian posterior assumption~\cite{debortoli2025distributionaldiffusionmodelsscoring,lipmanflow,liuflow}, or projecting the dynamics onto lower-dimensional subspaces~\cite{rombach2022high}. Whilst these techniques substantially reduce sampling time, they still typically require a non-trivial number of steps to achieve high-quality generation.  

Another important direction focuses on compressing the reverse diffusion process into shorter chains. This includes progressive distillation~\cite{salimansprogressive,meng2023distillation}, consistency models and their improvements~\cite{song2023consistency,songimproved,kim2023consistency}, as well as extensions to latent diffusion~\cite{luo2023latent,rombach2022high}. More recently, Inductive Moment Matching has been proposed as a single-stage framework that trains few-step models from scratch by matching distributional moments~\cite{zhou2025inductive}.  

In parallel, several works have investigated single-step diffusion models. Some fine-tune pre-trained diffusion models by minimising divergences such as the Diffusive KL or Fisher divergence~\cite{luo2023diff,xie2024distillation,zhou2024score,xu2025one}, while others incorporate adversarial losses to enhance perceptual fidelity~\cite{yin2024improved,zhou2024adversarial}. Our work builds on this line by eliminating the need for a separate student score network, thereby avoiding bias from inaccurate score estimation and enabling more stable training with improved generation quality.  

Finally, recent work has shown that teacher guidance is not strictly necessary for effective diffusion distillation, highlighting that the multi-scale representations learned by diffusion models are the crucial factor behind the success of one-step generators~\cite{zhang2025towards}.

\newpage
\section{More Convergence and Efficiency Results}
We present the log-MMD plots with respect to both generator gradient iterations and total training time, using the pre-trained DM and the given dataset in \cref{fig:speed_fake,fig:speed_emp}, respectively.

When relying on the pre-trained DM, both methods exhibit high variance and unstable curves due to inaccurate score predictions from the pre-trained model. The noticeable jump of the VarDiU variants in the middle of training arises from the annealing schedule, where the density concentrates on a range of $\sigma$ values for which the pre-trained DM suffers from large estimation errors. Nevertheless, the VarDiU variants maintain relatively low variance in the log-density curves, highlighting their robustness.

In contrast, when using the given dataset, we estimate the score empirically. The empirical score becomes increasingly accurate with larger sample sizes. As shown in \cref{fig:speed_emp}, the VarDiU variants achieve performance comparable to that obtained with the true analytical score. Furthermore, VarDiU demonstrates substantially improved training stability and efficiency when employing a Gaussian-parameterised posterior.

Overall, these results highlight the superiority of VarDiU, which consistently achieves more stable and efficient convergence across different settings compared to the baseline.
\begin{figure}[h]
    \centering
    \includegraphics[width=\linewidth]{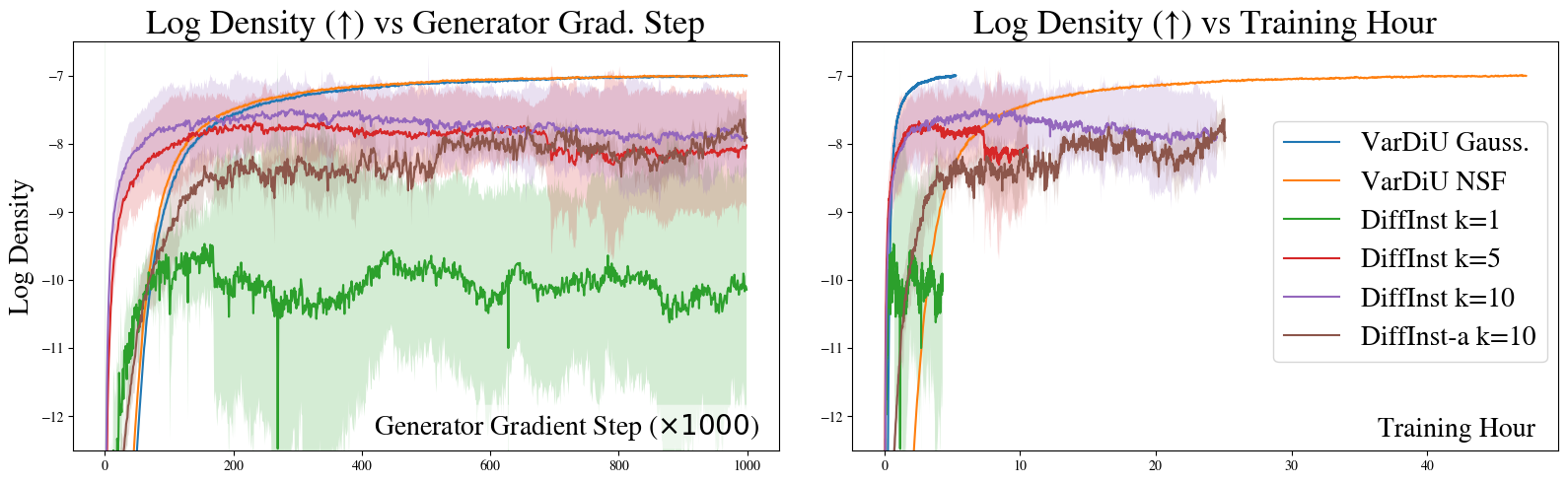}
    \caption{Log-density trajectory with true score seeting.}
    \label{fig:true_scorelogpd}
\end{figure}

\begin{figure}[h]
    \centering
    \begin{subfigure}[b]{0.98\textwidth}
        \centering
        \includegraphics[width=\linewidth]{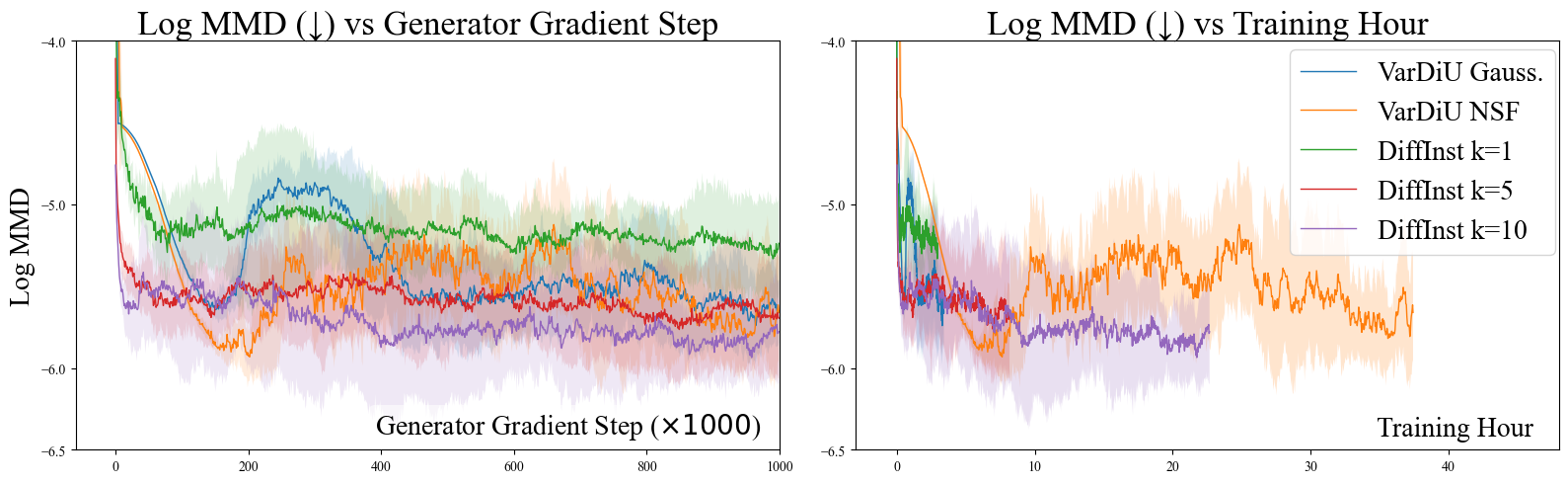}
        \caption{Log-MMD trajectory with pre-trained DM.}
    \end{subfigure}
    \begin{subfigure}[b]{0.98\textwidth}
        \centering
        \includegraphics[width=\linewidth]{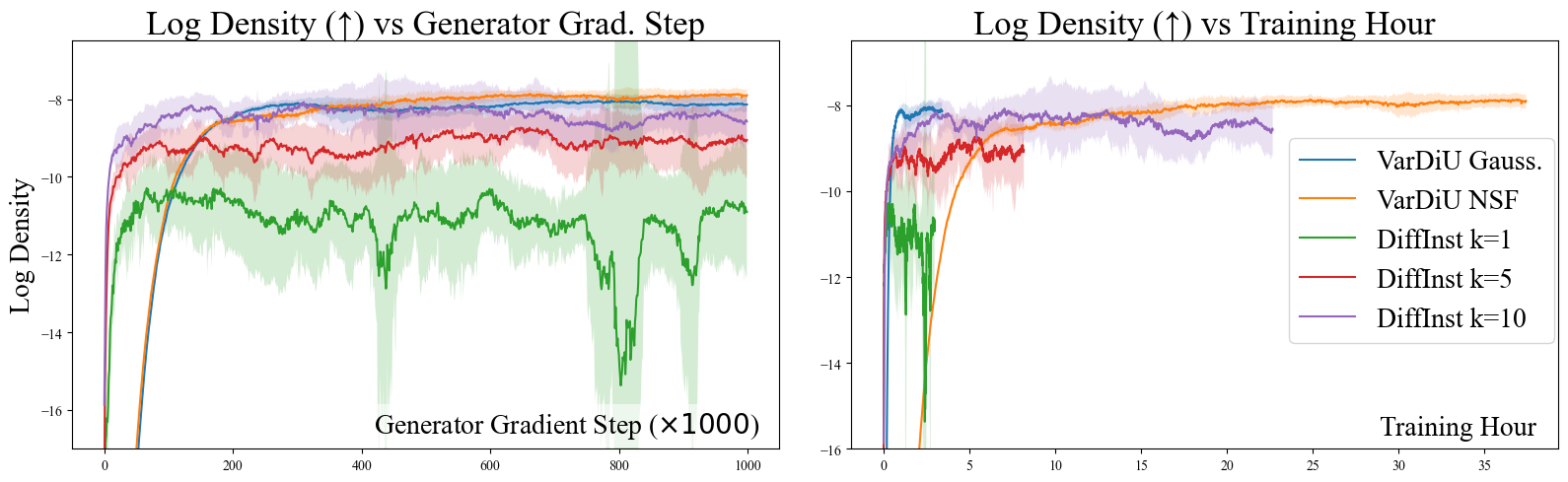}
        \caption{Log-density trajectory with pre-trained DM..}
    \end{subfigure}
    \caption{Comparison of samples log-density and log-MMD trajectories with learned score.}
    \label{fig:speed_fake}
\end{figure}

\begin{figure}[h]
    \centering
    \begin{subfigure}[b]{0.98\textwidth}
        \centering
        \includegraphics[width=\linewidth]{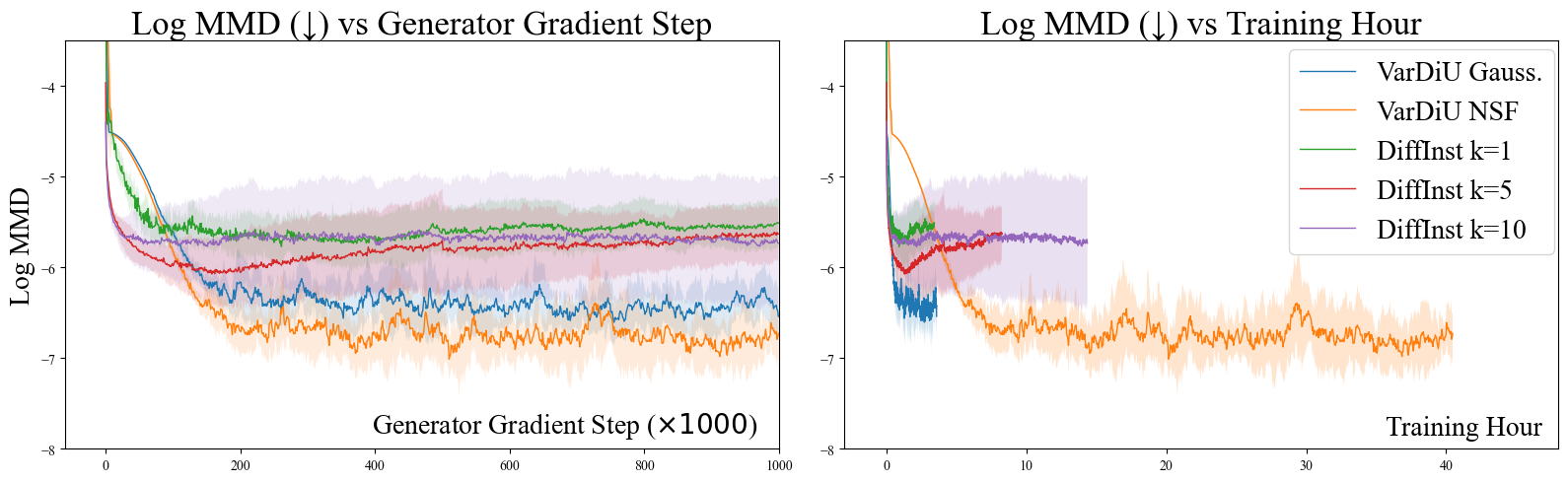}
        \caption{Log-MMD trajectory with a given dataset.}
    \end{subfigure}
    \begin{subfigure}[b]{0.98\textwidth}
        \centering
        \includegraphics[width=\linewidth]{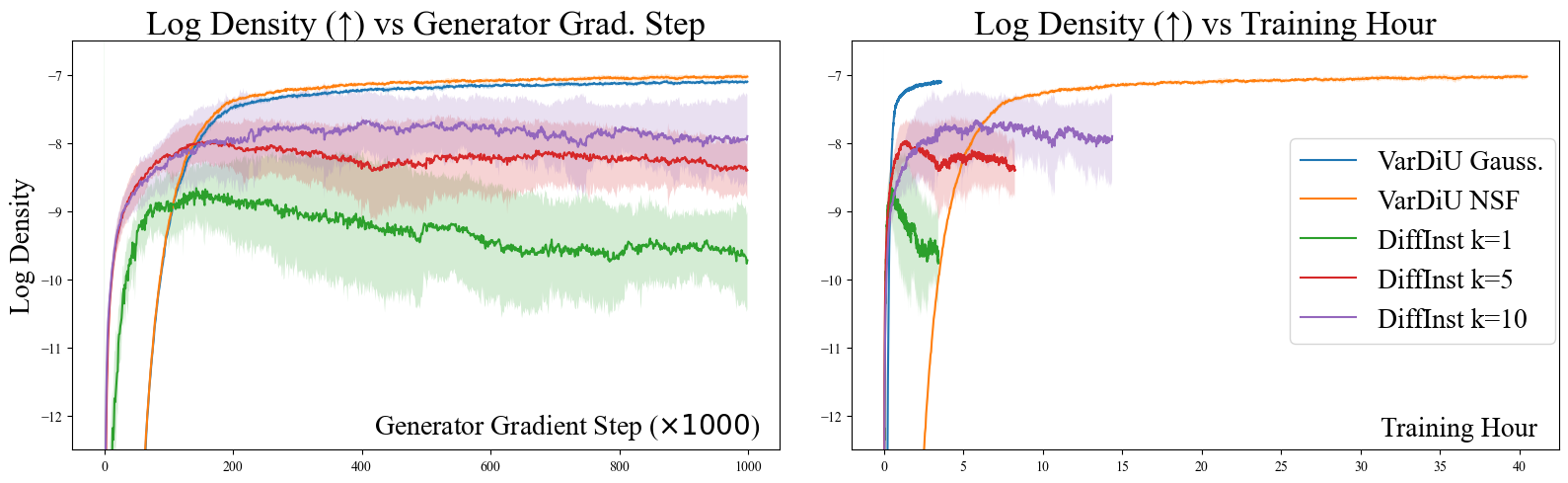}
        \caption{Log-density trajectory with a given dataset.}
    \end{subfigure}
    \caption{Comparison of samples log-density and log-MMD trajectories under setting of given dataset.}
    \label{fig:speed_emp}
\end{figure}

\end{document}